\DeclareMathOperator*{\argmax}{argmax}
\DeclareMathOperator*{\argmin}{argmin}
\newtheorem{theorem}{Theorem}
\newtheorem{definition}{Definition}
\newtheorem{corollary}{Corollary}
\newtheorem{lemma}{Lemma}
\title{
Backpropagation of Unrolled Solvers with Folded Optimization
}
\author{James Kotary  
\and My H Dinh 
\And Ferdinando Fioretto \\
\affiliations
  University of Virginia
\emails
\{jk4pn, fqw2tz, fioretto\}@virginia.edu
}
\begin{document}
\maketitle

\begin{abstract}
The integration of constrained optimization models as components in deep networks has led to promising advances on many specialized learning tasks. 
A central challenge in this setting is backpropagation through the solution of an optimization problem, which typically lacks a closed form. One typical strategy is algorithm unrolling, which relies on automatic differentiation through the operations of an iterative solver. While flexible and general, unrolling can encounter accuracy and efficiency issues in practice. These issues can be avoided by analytical differentiation of the optimization, but current frameworks impose rigid requirements on the optimization problem's form. This paper provides theoretical insights into the backward pass of unrolled optimization, leading to a system for generating  efficiently solvable analytical models of backpropagation. Additionally, it proposes a unifying view of unrolling and analytical differentiation through optimization mappings. Experiments over various model-based learning tasks demonstrate the advantages of the approach both computationally and in terms of enhanced expressiveness.
\end{abstract}

\maketitle\sloppy\allowdisplaybreaks

\section{Introduction}
\label{sec:introduction}

The integration of optimization problems as components in neural networks has shown to be an effective framework for enforcing structured representations in deep learning. A parametric optimization problem defines a mapping from its unspecified parameters to the resulting optimal solutions, which is treated as a layer of a neural network. Outputs of the layer are then guaranteed to obey the problem's constraints, which may be predefined or learned \citep{kotary2021end}. 

Using optimization as a layer can offer enhanced accuracy and efficiency on specialized learning tasks by imparting task-specific structural knowledge. For example, it has been used to design efficient multi-label classifiers and sparse attention mechanisms \citep{martins2016softmax}, learning to rank based on optimal matching \citep{adams2011ranking, kotary2022end}, 
 accurate model selection protocols \citep{kotary2023_IJCAI}, and enhanced models for optimal decision-making under uncertainty \citep{wilder2019melding}. 

While constrained optimization mappings can be used as components in neural networks in a similar manner to linear layers or activation functions \citep{amos2019optnet}, a prerequisite is their differentiation, for backpropagation of gradients in end-to-end training by stochastic gradient descent.

This poses unique challenges, partly due to their lack of a closed form, and modern approaches typically follow one of two strategies. In \emph{unrolling}, an optimization algorithm is executed entirely on the computational graph, and backpropagated by automatic differentiation from optimal solutions to the underlying problem parameters. The approach is adaptable to many problem classes, but has been shown to suffer from time and space inefficiency, as well as vanishing gradients \citep{monga2021algorithm}. \emph{Analytical differentiation} is a second strategy that circumvents those issues by forming implicit models for the derivatives of an optimization mapping and solving them exactly. However, current frameworks have rigid requirements on the form of the optimization problems, such as relying on transformations to canonical convex cone programs before applying a standardized procedure for their solution and differentiation  \citep{agrawal2019differentiable}.  This  system precludes the use of specialized solvers that are best-suited to handle various optimization problems, and inherently restricts itself only to convex problems.\footnote{A discussion of related work on differentiable optimization and decision-focused learning is provided in Appendix 
A.}

\paragraph{Contributions.}
To address these limitations, this paper proposes a novel analysis of unrolled optimization, which results in efficiently-solvable models for the backpropagation of unrolled optimization. Theoretically, the result is significant because it establishes an equivalence between unrolling and analytical differentiation, and allows for convergence of the backward pass to be analyzed in unrolling. Practically, it allows for the forward and backward passes of unrolled optimization to be disentangled and solved separately, using blackbox implementations of specialized algorithms. More specifically, this paper makes the following novel contributions:      
\textbf{(1)} A theoretical analysis of unrolling that leads to an efficiently solvable closed-form model, whose solution is equivalent to the backward pass of an unrolled optimizer.  
\textbf{(2)} Building on this analysis, it proposes a system for generating  analytically differentiable optimizers from unrolled implementations, accompanied by a Python library called \texttt{fold-opt} to facilitate automation. \textbf{(3)} Its efficiency and modeling advantages are demonstrated on a diverse set of end-to-end optimization and learning tasks, including the first demonstration of decision-focused learning with \emph{nonconvex} decision models.
    

\section{Setting and Goals}
\label{sec:SettingAndGoals}

In this paper, the goal is to differentiate mappings that are defined as the solution to an optimization problem.  Consider the parameterized problem  (\ref{eq:opt_generic}) which defines a function from a vector of parameters $\mathbf{c} \in \mathbb{R}^p$ to its associated optimal solution $\mathbf{x}^{\star}( \mathbf{c} ) \in \mathbb{R}^n$:     
\begin{subequations} 
    \label{eq:opt_generic}
    \begin{align}
        \mathbf{x}^{\star}(\mathbf{c}) =  \argmin_{\mathbf{x}} &\; f(\mathbf{x},\mathbf{c})  \label{eq:opt_generic_objective}\\        
        \text{subject to: }&\;
        g(\mathbf{x},\mathbf{c}) \leq \mathbf{0},  \label{eq:opt_generic_inequality}\\
        &\; h(\mathbf{x},\mathbf{c}) = \mathbf{0},\label{eq:opt_generic_equality}
    \end{align}
\end{subequations}
in which $f$ is the objective function, and $g$ and $h$ are vector-valued functions capturing the inequality and equality constraints of the problem, respectively. The parameters $\mathbf{c}$ can be thought of as a prediction from previous layers of a neural network, or as learnable parameters analogous to the weights of a linear layer, or as some combination of both.   It is assumed throughout that for any $\mathbf{c}$, the associated optimal solution $\mathbf{x}^{\star}(\mathbf{c})$ can be found by conventional methods, within some tolerance in solver error. 
This coincides with the ``forward pass'' of the mapping in a neural network. \emph{The primary challenge is to compute its backward pass}, which amounts to finding the Jacobian matrix of partial derivatives $\frac{\partial \mathbf{x}^{\star}(\mathbf{c})}{\partial \mathbf{c}}$ .

\paragraph{Backpropagation.}  
 Given a downstream task loss $\mathcal{L}$, backpropagation through   $\mathbf{x}^{\star}(\mathbf{c})$ amounts to computing $\frac{\partial \mathcal{L}}{\partial \mathbf{c}}$ given $\frac{\partial \mathcal{L}}{\partial \mathbf{x}^{\star}}$.  In deep learning, backpropagation through a layer is typically accomplished by automatic differentiation (AD), which propagates gradients through the low-level operations of an overall composite function by repeatedly applying the multivariate chain rule. This can be performed automatically given a forward pass implementation in an AD library such as PyTorch. However, it requires a record of all the operations performed during the forward pass and their dependencies, known as the \emph{computational graph}.

\paragraph{Jacobian-gradient product (JgP).}
The \emph{Jacobian} matrix of the vector-valued function $\mathbf{x}^{\star}(\mathbf{c}): \mathbb{R}^p \rightarrow \mathbb{R}^n$ is a matrix $\frac{\partial \mathbf{x}^{\star}}{\partial \mathbf{c}}$ in $\mathbb{R}^{n \times p}$, whose elements at $(i,j)$  are the partial derivatives $\frac{\partial x^{\star}_i(\mathbf{c})}{\partial c_j}$. When the Jacobian is known, backpropagation through $\mathbf{x}^{\star}(\mathbf{c})$  can be performed  by computing the product
\begin{equation}
\label{eq:jvprod}
  \frac{\partial \mathcal{L}}{\partial \mathbf{c}} = \frac{\partial \mathcal{L}}{\partial \mathbf{x}^{\star}} \cdot \frac{\partial \mathbf{x}^{\star}(\mathbf{c})}{\partial \mathbf{c}}.
 \end{equation}

\begin{figure}[t]
    \centering
    \includegraphics[width=1.0\linewidth]{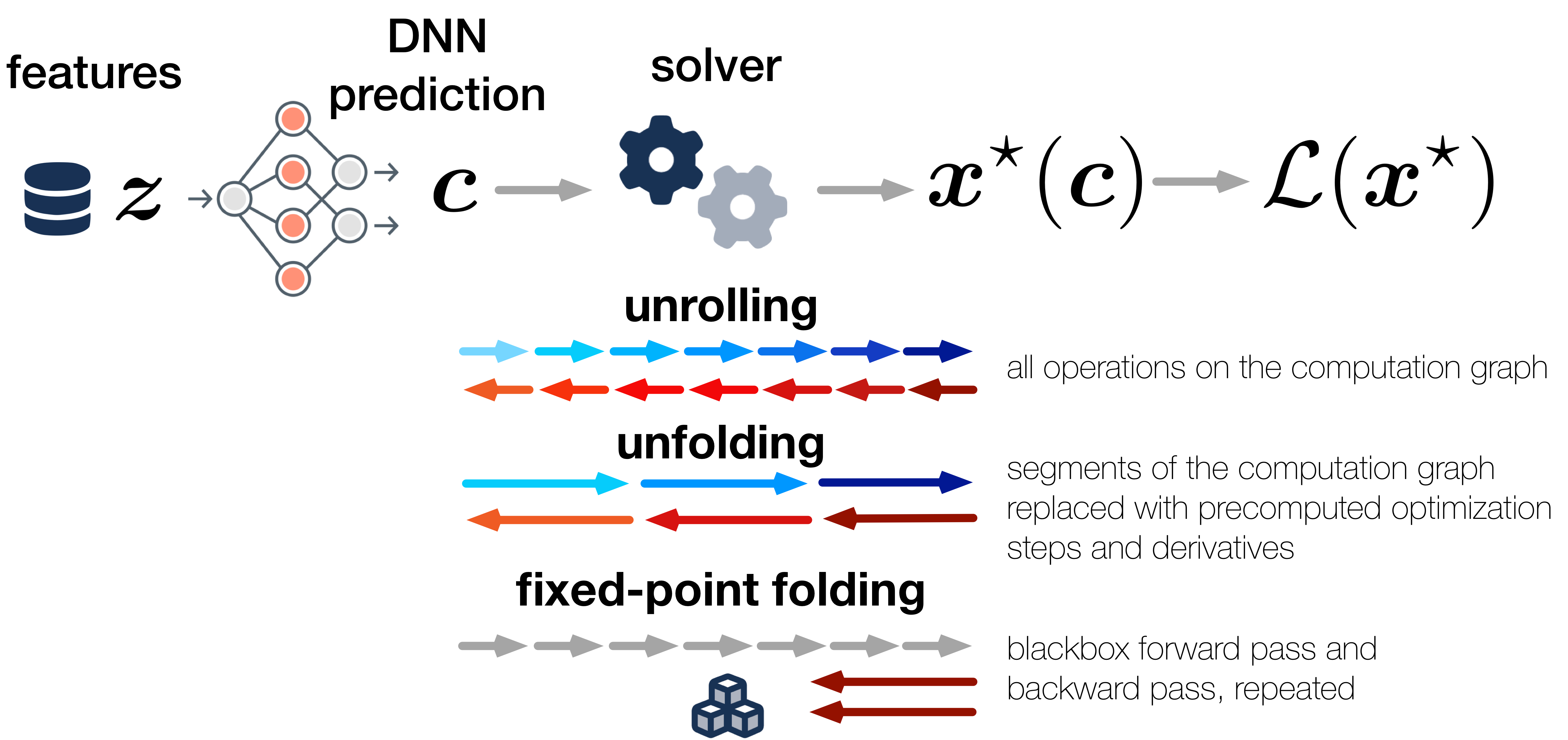}
    \caption{Compared to unrolling, unfolding requires fewer operations on the computational graph by replacing inner loops with Jacobian-gradient products.   Fixed-point folding models the unfolding analytically, allowing for blackbox implementations.}
    \label{fig:unfolding_scheme}
\end{figure}

\paragraph{Folded Optimization: Overview.}

The problem \eqref{eq:opt_generic} is most often solved by iterative methods, which refine an initial \emph{starting point} $\mathbf{x}_0$ by repeated application of a subroutine, which we view as a function. For optimization variables $\mathbf{x} \in \mathbb{R}^n$, the \emph{update function} is  a vector-valued function $\mathcal{U}: \mathbb{R}^n \rightarrow \mathbb{R}^n$:
\begin{equation}
    \label{eq:opt_iteration}\tag{U}
        \mathbf{x}_{k+1}(\mathbf{c}) = \mathcal{U}(\mathbf{x}_k (\mathbf{c}) ,\;  \mathbf{c} ).
\end{equation}
\noindent 
The iterations (\ref{eq:opt_iteration}) \emph{converge} if $\mathbf{x}_{k}(\mathbf{c}) \rightarrow \mathbf{x}^{\star}(\mathbf{c})$ as $k \rightarrow \infty$. 
When \emph{unrolling}, the iterations \eqref{eq:opt_iteration} are computed and recorded on the computational graph, and the function $\mathbf{x}^{\star}(\mathbf{c})$ can be thereby be backpropagated by AD without explicitly representing its Jacobian. However, unrolling over many iterations often faces time and space inefficiency issues due to the need for graph storage and traversal \cite{monga2021algorithm}.
In the following sections, we show how the backward pass of unrolling can be analyzed to yield equivalent analytical models for the Jacobian of $\mathbf{x}^{\star}(\mathbf{c})$. We recognize two key challenges in modeling the backward pass of unrolling iterations \eqref{eq:opt_iteration}. 

First, it often happens that evaluation of $\mathcal{U}$ in \eqref{eq:opt_iteration} requires the solution of another optimization subproblem, such as a projection or proximal operator, which must also be unrolled. Section \ref{sec:unfolding} introduces \emph{\textbf{unfolding}} as a variant of unrolling, in which the unrolling of such inner loops is circumvented by analytical differentiation of the subproblem, allowing the analysis to be confined to a single unrolled loop. 

Second, the backward pass of an unrolled solver is determined by its forward pass, whose trajectory depends on its (potentially arbitrary) starting point and the convergence properties of the chosen algorithm. Section \ref{sec:Unfolding_at_a_fixed_point} shows that the backward pass converges correctly even when the forward pass iterations are initialized at a precomputed optimal solution. This allows for separation of the forward and backward passes, which are typically entangled across unrolled iterations, greatly simplifying the backward pass model and allowing for  blackbox implementations of both passes.

Section \ref{sec:FoldedOptimization} uses these concepts to show that the backward pass of unfolding \eqref{eq:opt_iteration} follows exactly the solution of the linear system for $\frac{\partial \mathbf{x}^{\star}(\mathbf{c})}{\partial \mathbf{c}}$ which arises by differentiating the fixed-point conditions of \eqref{eq:opt_iteration}. Section \ref{sec:Experiments} then outlines \emph{\textbf{fixed-point folding}}, a system for generating Jacobian-gradient products through optimization mappings from their unrolled solver implementations, based on efficient solution of the models proposed in Section \ref{sec:FoldedOptimization}. The main differences between unrolling, unfolding, and fixed-point folding are illustrated in Figure \ref{fig:unfolding_scheme}.

\section{From Unrolling to Unfolding}
\label{sec:unfolding}
For many optimization algorithms of the form (\ref{eq:opt_iteration}), the update function $\mathcal{U}$ is composed of closed-form functions that are relatively simple to evaluate and differentiate. In general though, $\mathcal{U}$ may itself employ an optimization subproblem that is nontrivial to differentiate. That is, 
\begin{equation}
    \label{eq:inner_optimization}\tag{O}
        \mathcal{U} ( \mathbf{x}_k   ) \coloneqq {\mathcal{T}} \left( \;  \mathcal{O}(\mathcal{S}(\mathbf{x}_k)), \;\;  \mathbf{x}_k  \;\right),
\end{equation}
\noindent 
wherein the differentiation of $\mathcal{U}$ is complicated by an \emph{inner optimization} sub-routine $\mathcal{O}: \mathbb{R}^n \rightarrow \mathbb{R}^n$. 
Here, $\mathcal{S}$ and $\mathcal{T}$ represent any steps preceding or following the inner optimization (such as gradient steps), viewed as closed-form functions. In such cases, unrolling (\ref{eq:opt_iteration}) would also require unrolling  $\mathcal{O}$. If the Jacobians of $\mathcal{O}$ can be found, then backpropagation through $\mathcal{U}$ can be completed, free of unrolling, by applying a chain rule through Equation (\ref{eq:inner_optimization}), which in this framework is handled naturally by automatic differentiation of $\mathcal{T}$ and $\mathcal{S}$.

Then, only the outermost iterations (\ref{eq:opt_iteration}) need be unrolled on the computational graph for backpropagation. This partial unrolling, which allows for backpropagating large segments of computation at a time by leveraging analytically differentiated subroutines, is henceforth referred to as \emph{unfolding}. It is made possible when the update step $\mathcal{U}$ is easier to differentiate than the overall optimization mapping $\mathbf{x}^{\star}(\mathbf{c})$.
\begin{definition}[Unfolding]
\label{def:unfolding}
An \emph{unfolded} optimization of the form (\ref{eq:opt_iteration}) is one in which the backpropagation of $\mathcal{U}$ at each step does not require unrolling an iterative algorithm.
\end{definition}
Unfolding is distinguished from more general unrolling by the presence of only a single unrolled loop. This definition sets the stage for Section \ref{sec:FoldedOptimization}, which shows how the backpropagation of an unrolled loop can be modeled with a Jacobian-gradient product. Thus, unfolded optimization is a precursor to the complete replacement of backpropagation through loops in unrolled solver implementations by JgP. 

When $\mathcal{O}$ has a closed form and does not require an iterative solution, the definitions unrolling and unfolding coincide. When $\mathcal{O}$ is nontrivial to solve but has known Jacobians, they can be used to produce an unfolding of (\ref{eq:opt_iteration}). Such is the case when $\mathcal{O}$ is a Quadratic Program (QP); a JgP-based differentiable QP solver called \texttt{qpth} is provided by  \cite{amos2019optnet}. Alternatively, the replacement of unrolled loops by JgP's proposed in Section \ref{sec:FoldedOptimization} can be applied recursively $\mathcal{O}$. 

\begin{figure}[t]
    \centering
    \includegraphics[width=0.70\linewidth]{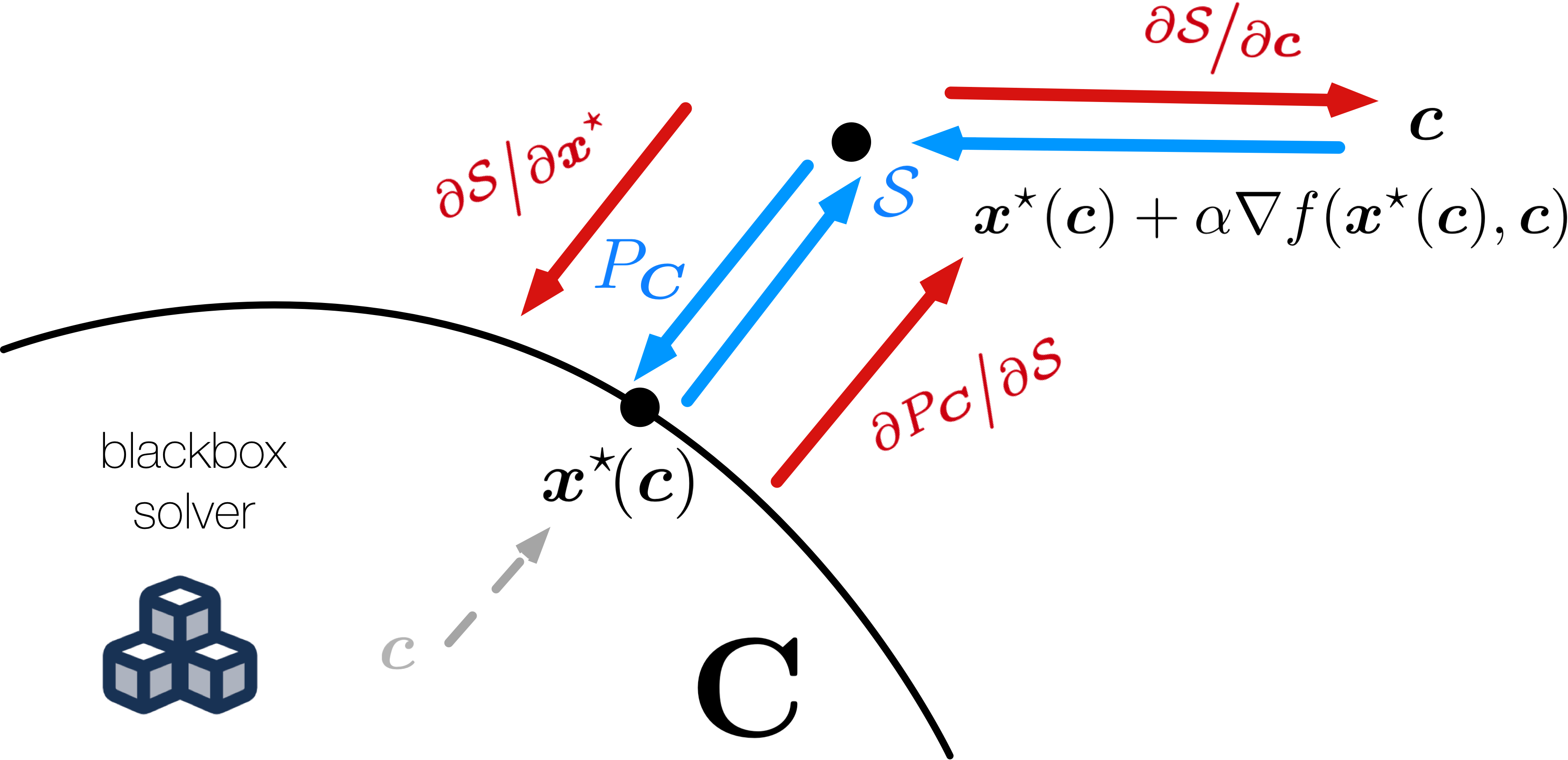}
    \caption{Unfolding Projected Gradient Descent at $\mathbf{x}^{\star}$ consists of alternating gradient step $\mathcal{S}$ with projection $\mathcal{P}_{\mathbf{C}}$. Each function's forward and backward pass are in blue and red, respectively. }
    \label{fig:unfold_pgd}
\end{figure}

These concepts are illustrated in the following examples, highlighting the roles of $\mathcal{U}$, $\mathcal{O}$  and $\mathcal{S}$. Each will be used to create folded optimization mappings for a variety of learning tasks in Section \ref{sec:Experiments}.

\paragraph{Projected gradient descent.}
Given a problem 
\begin{equation}
\label{eq:problem_pgd}
        \min_{\mathbf{x} \in \mathbf{C}} \; f(\mathbf{x})   
\end{equation}
where $f$ is differentiable and $\mathbf{C}$ is the feasible set, Projected Gradient Descent (PGD) follows the update function
\begin{equation}
\label{eq:update-pgd}
        \mathbf{x}_{k+1} =   \mathcal{P}_{\mathbf{C}}( \mathbf{x}_k - \alpha_k \nabla f (\mathbf{x}_k) ),
\end{equation}
where $\mathcal{O} = \mathcal{P}_{\mathbf{C}}$ is the Euclidean projection onto $\mathbf{C}$, and $\mathcal{S}(\mathbf{x}) = \mathbf{x} - \alpha \nabla f (\mathbf{x}) $ is a gradient descent step. Many simple $\mathbf{C}$ have closed-form projections to facilitate unfolding of (\ref{eq:update-pgd}) (see \citep{beck2017first}). Further, when $\mathbf{C}$ is linear, $\mathcal{P}_\mathbf{C}$ is a quadratic programming (QP) problem for which a differentiable solver \texttt{qpth} is available from \cite{amos2019optnet}.

Figure \ref{fig:unfold_pgd} shows one iteration of unfolding projected gradient descent, with  the forward and backward pass of each recorded operation on the computational graph illustrated in blue and red, respectively. 

\paragraph{Proximal gradient descent.}
More generally, to solve 
\begin{equation}
\label{eq:problem_prox}
        \min_{\mathbf{x}} \; f(\mathbf{x}) + g(\mathbf{x}) 
\end{equation}
where $f$ is differentiable and $g$ is a closed convex function, proximal gradient descent follows the update function
\begin{equation}
\label{eq:update-prox}
        \mathbf{x}_{k+1} =   \operatorname{Prox}_{\alpha_k g}
        \left(\mathbf{x}_k - \alpha_k \nabla f (\mathbf{x}_k) \right).
\end{equation}
Here $\mathcal{O}$ is the proximal operator, defined as
\begin{equation}
\label{eq:def-prox}
\operatorname{Prox}_{g}(\mathbf{x}) = \argmin_{y} 
\left\{g(\mathbf{y}) + \frac{1}{2} 
\| \mathbf{y} - \mathbf{x}  \|^2 \right\},
\end{equation}
and its difficulty depends on $g$. Many simple proximal operators can be represented in closed form and have simple derivatives. 
For example, 
when $g(\mathbf{x}) = \lambda \|\mathbf{x}\|_1$, then $\operatorname{Prox}_{g} = \mathcal{T}_{\lambda}(\mathbf{x})$ is the soft thresholding operator, whose closed-form formula and derivative are given in Appendix \ref{appendix:models}.

\paragraph{Sequential quadratic programming.}
Sequential Quadratic Programming (SQP)  solves the general optimization problem (\ref{eq:opt_generic}) by approximating it at each step by a QP problem, whose objective is a second-order approximation of the problem's Lagrangian function, subject to a linearization of its constraints. SQP is well-suited for  unfolded optimization, as it can solve a broad class of convex and nonconvex problems and can readily be unfolded by implementing its QP step (shown in Appendix \ref{appendix:models}) 
with the  \texttt{qpth} differentiable QP solver.

\paragraph{Quadratic programming by ADMM.}
The  QP solver of \cite{boyd2011distributed}, based on the alternating direction of multipliers, is specified in Appendix \ref{appendix:models}. 
Its inner optimization step $\mathcal{O}$ is a simpler equality-constrained QP; its solution is equivalent to solving a linear system of equations, which has a simple derivative rule in PyTorch.

\smallskip
Given an unfolded QP solver by ADMM, its unrolled loop can be replaced with backpropagation by JgP as shown in Section \ref{sec:FoldedOptimization}. The resulting  differentiable QP solver can then take the place of \texttt{qpth} in the examples above. Subsequently, \emph{this technique can be applied recursively} to the resulting unfolded PGD and SQP solvers. This exemplifies the intermediate role of unfolding in converting unrolled, nested solvers to fully JgP-based implementations, detailed in Section~\ref{sec:Experiments}.

From the viewpoint of unfolding, the analysis of backpropagation in unrolled solvers can be simplified by accounting for only a single unrolled loop. The next section identifies a further simplification: \emph{that the backpropagation of an unfolded solver can be completely characterized by its action at a fixed point of the solution's algorithm.} 

\section{Unfolding at a Fixed Point}
\label{sec:Unfolding_at_a_fixed_point}
Optimization procedures of the form  (\ref{eq:opt_iteration}) generally require a  starting point $\mathbf{x}_0$, which is often chosen arbitrarily, since convergence $\mathbf{x}_k \to \mathbf{x}^{\star}$ of iterative algorithms is typically guaranteed regardless of starting point. It is natural to then ask how the choice of $\mathbf{x}_0$ affects the convergence of the backward pass. We define backward-pass convergence as follows: 
\begin{definition}
Suppose that an unfolded iteration (\ref{eq:opt_iteration}) produces a convergent sequence of solution iterates $\lim_{k \to \infty} \mathbf{x}_k  = \mathbf{x}^{\star}$ in its forward pass. Then convergence of the backward pass is 
\begin{equation}
\label{eq:limit-x-diff}
\lim_{k \to \infty} \frac{\partial \mathbf{x}_k}{\partial \mathbf{c}}(\mathbf{c})  = \frac{\partial \mathbf{x}^{\star}}{\partial \mathbf{c}}(\mathbf{c}).
\end{equation}
\end{definition}

\begin{figure}[t]
\centering
    \vspace{-12pt}
    \includegraphics[width=0.8\linewidth]{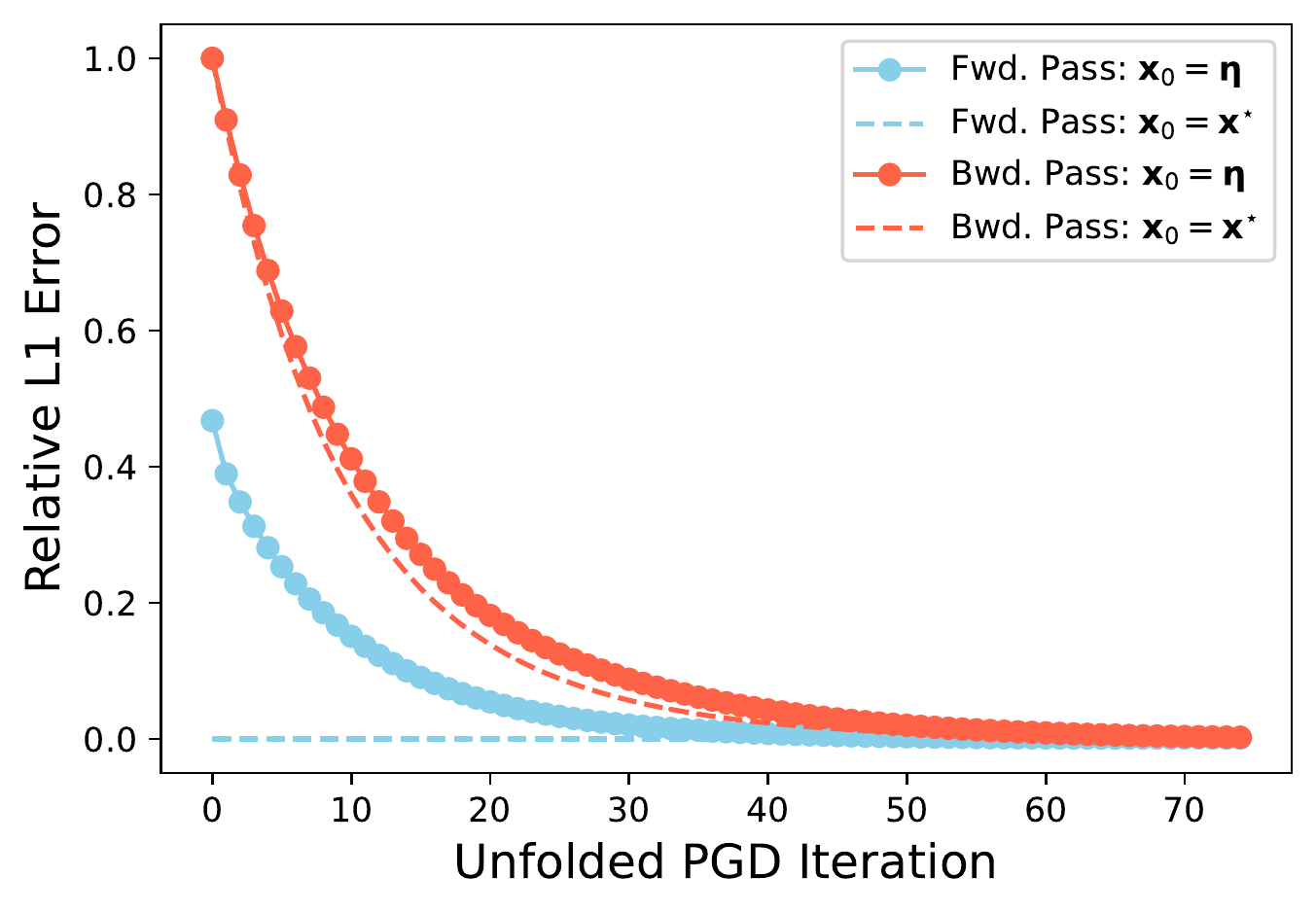} 
    \caption{Forward and backward pass error in unfolding PGD}
    \label{fig:fwd_bwd_err}
\end{figure}

\paragraph{Effect of the starting point on backpropagation.} 
Consider the optimization mapping (\ref{eq:topk-lp}) which maps feature embeddings to smooth top-$k$ class predictions, and will be  used to learn multilabel classification later in Section \ref{sec:Experiments}. 
A loss function $\mathcal{L}$ targets ground-truth top-$k$ indicators, and the result of the backward pass is the gradient $\frac{\partial \mathcal{L}}{\partial \mathbf{c}}$.  
To evaluate backward pass convergence in unfolded projected gradient descent, we measure the relative $L_1$ errors of the forward and backward passes, relative to the equivalent result after full convergence. We consider two starting points: 
the precomputed optimal solution $\mathbf{x}^a_0 = \mathbf{x}^{\star}$, and a uniform random vector $\mathbf{x}^b_0 = \bm{\eta} \sim \mathbf{U}(0,1)$. The former case is illustrated in Figure \ref{fig:unfold_pgd}, in which $\mathbf{x}_k$  remains stationary at each step.

Figure \ref{fig:fwd_bwd_err} reports the errors of the forward and backward pass at each iteration of the unfolded PGD under these two starting points. 
The figure shows that when starting the unfolding from the precomputed optimal solution $\mathbf{x}^a_0$, the forward pass error remains within error tolerance to zero. This is because $\mathbf{x}^{\star}(\mathbf{c}) \!=\! \mathcal{U}( \mathbf{x}^{\star}(\mathbf{c}), \mathbf{c} )$ is a \emph{fixed point} of (\ref{eq:opt_iteration}). Interestingly though, the backward pass also converges, but at a slightly faster rate than when starting from the random $\mathbf{x}^b_0$. 

We will see that this phenomenon holds in general: when an unfolded optimizer is iterated at a precomputed optimal solution, its backward pass converges. This has practical implications which can be exploited to improve the efficiency and modularity of differentiable optimization layers based on unrolling. These improvements will form the basis of our system for converting unrolled solvers to JgP-based implementations, called \emph{folded optimization}, and are discussed next.

\paragraph{Fixed-Point Unfolding: Forward Pass.}  
Note first that backpropagation by unfolding at a fixed point must assume that a fixed point has already been found. This is generally equivalent to finding a local optimum of the optimization problem which defines the forward-pass mapping \eqref{eq:opt_generic} \citep{beck2017first}. Since the calculation of the fixed point itself does not need to be backpropagated, it can be furnished by a \emph{blackbox} solver implementation. Furthermore, when $\mathbf{x}_0 = \mathbf{x}^{\star}$ is a fixed point of the iteration (\ref{eq:opt_iteration}), we have $\mathcal{U}(\mathbf{x}_{k}) = \mathbf{x}_{k} = \mathbf{x}^{\star}, \, \forall k$. Hence, \emph{there is no need to evaluate the forward pass} of $\mathcal{U}$ in each unfolded iteration of (\ref{eq:opt_iteration}) at $\mathbf{x}^{\star}$.

This enables the use  of any specialized method to compute the forward pass optimization \eqref{eq:opt_generic}, which can be different from unfolded algorithm used for backpropagation, assuming it shares the same fixed point. It also allows for highly optimized software implementations such as Gurobi \citep{gurobi}, and is a major advantage over existing differentiable optimization frameworks such as \texttt{cvxpy}, which requires converting the problem to a convex cone program before solving it with a specialized operator-splitting method for conic programming \citep{agrawal2019differentiable}, rendering it inefficient for many optimization problems.

\paragraph{Fixed-Point Unfolding: Backward Pass.}   

While the forward pass of each unfolded update step \eqref{eq:opt_iteration} need not be recomputed at a fixed point, the dotted curves of Figure \ref{fig:fwd_bwd_err} illustrate that its backward pass must still be iterated until convergence.  However, since $\mathbf{x}_{k} =  \mathbf{x}^{\star}$, we also have  $\frac{\partial \mathcal{U}(\mathbf{x}_{k})}{\partial \mathbf{x}_{k}} = \frac{\partial \mathcal{U}(\mathbf{x}^{\star})}{\partial \mathbf{x}^{\star}}$ at each iteration. Therefore the backward pass of $\mathcal{U}$ need only be computed \emph{once}, and iterated until backpropagation of the full optimization mapping \eqref{eq:opt_generic} converges. 

Next, it will be shown that this process is equivalent to iteratively solving a linear system of equations. We identify the iterative method first, and then the linear system it solves, before proceeding to prove this fact. The following textbook result can be found, e.g., in \citep{quarteroni2010numerical}.

\begin{lemma}
\label{lemma:linear-iteration}
Let $\mathbf{B} \in \mathbb{R}^{n \times n}$ and $\mathbf{b} \in \mathbb{R}^{n}$.  For any $\mathbf{z}_0 \in \mathbb{R}^n$, the iteration 
\begin{equation}
\label{eq:linear-iteration} \tag{LFPI}
        \mathbf{z}_{k+1}  =
         \mathbf{B} \mathbf{z}_{k} + 
        \mathbf{b} 
\end{equation}
converges to the solution $\mathbf{z}$ of the linear system
\( \mathbf{z} = \mathbf{B} \mathbf{z} +  \mathbf{b} \)
whenever $\mathbf{B}$ is nonsingular and has spectral radius
$\rho(\mathbf{B}) < 1$. 
Furthermore, the asymptotic convergence rate for $\mathbf{z}_k \to \mathbf{z}$ is 
\begin{equation}
\label{eq:convergence-rate}
    -\log\; \rho(\mathbf{B}) .
\end{equation}
\end{lemma}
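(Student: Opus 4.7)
The plan is to reduce the iteration to a pure error recursion and then invoke standard spectral radius facts. First I would establish existence and uniqueness of the fixed point $\mathbf{z}$. Since $\rho(\mathbf{B}) < 1$, the value $1$ is not an eigenvalue of $\mathbf{B}$, so $\mathbf{I} - \mathbf{B}$ is invertible and $\mathbf{z} = (\mathbf{I}-\mathbf{B})^{-1}\mathbf{b}$ is the unique solution of $\mathbf{z} = \mathbf{B}\mathbf{z} + \mathbf{b}$. (The hypothesis that $\mathbf{B}$ is itself nonsingular is not actually needed here; it is only the spectral radius condition on $\mathbf{B}$ that drives the argument.)

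Next I would subtract the fixed-point equation from the update rule to obtain the error recurrence
\begin{equation*}
\mathbf{e}_{k+1} \;\coloneqq\; \mathbf{z}_{k+1} - \mathbf{z} \;=\; \mathbf{B}(\mathbf{z}_k - \mathbf{z}) \;=\; \mathbf{B}\,\mathbf{e}_k,
\end{equation*}
so by induction $\mathbf{e}_k = \mathbf{B}^k \mathbf{e}_0$. Convergence $\mathbf{z}_k \to \mathbf{z}$ for every $\mathbf{z}_0$ is therefore equivalent to $\mathbf{B}^k \to \mathbf{0}$. This I would prove via the Jordan decomposition $\mathbf{B} = \mathbf{P}\mathbf{J}\mathbf{P}^{-1}$: a direct computation shows that each Jordan block of size $m$ for an eigenvalue $\lambda$ has $k$-th power whose entries are bounded by polynomials in $k$ times $|\lambda|^{k-m+1}$, which tend to zero whenever $|\lambda| < 1$. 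Combining this across all blocks yields $\mathbf{J}^k \to \mathbf{0}$ and hence $\mathbf{B}^k \to \mathbf{0}$ when $\rho(\mathbf{B}) < 1$.

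For the asymptotic rate, I would invoke Gelfand's formula, $\lim_{k\to\infty} \|\mathbf{B}^k\|^{1/k} = \rho(\mathbf{B})$, valid in any submultiplicative matrix norm. From $\|\mathbf{e}_k\| \le \|\mathbf{B}^k\|\,\|\mathbf{e}_0\|$ it follows that $\|\mathbf{e}_k\|^{1/k} \to \rho(\mathbf{B})$ (taking $\mathbf{e}_0$ in a direction that prevents faster cancellation for the lower bound), so $\|\mathbf{e}_k\| \sim \rho(\mathbf{B})^k$ in the logarithmic sense. Taking $-\log$ of the contraction factor per step gives the asymptotic rate $-\log\rho(\mathbf{B})$ stated in \eqref{eq:convergence-rate}.

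The only subtle step is the rate argument: the bound $\|\mathbf{e}_k\| \le \|\mathbf{B}^k\|\|\mathbf{e}_0\|$ only gives an upper bound on the rate, and a matching lower bound requires an appropriate choice of initial error (or a norm equivalence argument) so that $\|\mathbf{B}^k \mathbf{e}_0\|^{1/k}$ also tends to $\rho(\mathbf{B})$. Since the lemma asks only for the asymptotic rate, this can be handled by choosing $\mathbf{e}_0$ as an eigenvector (or generalized eigenvector) associated with an eigenvalue of maximum modulus, for which $\|\mathbf{B}^k \mathbf{e}_0\|$ decays at exactly the Gelfand rate. The rest of the proof is routine linear algebra and I would cite \citep{quarteroni2010numerical} for the standard spectral-radius / power-iteration facts rather than reprove them.
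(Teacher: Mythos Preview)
Your argument is correct and complete; indeed, you supply more than the paper does. The paper does not prove this lemma at all: it introduces it as a ``textbook result'' and simply cites \citep{quarteroni2010numerical}. Your reduction to the error recursion $\mathbf{e}_k = \mathbf{B}^k\mathbf{e}_0$, the Jordan-block argument for $\mathbf{B}^k\to\mathbf{0}$, and the appeal to Gelfand's formula for the rate are exactly the standard proof one finds in that reference, so there is no discrepancy in approach---you have just written out what the paper delegates. Your side remark that the nonsingularity of $\mathbf{B}$ is superfluous (only $\rho(\mathbf{B})<1$, equivalently $\mathbf{I}-\mathbf{B}$ invertible, is needed) is also correct and worth keeping.
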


\noindent 
Linear fixed-point iteration (LFPI) is a foundational iterative linear system solver, and can be applied to any linear system $\mathbf{A} \mathbf{x} \!=\! \mathbf{b}$ by rearranging 
$\mathbf{z} \!=\! \mathbf{B} \mathbf{z} \!+\! \mathbf{b}$ and identifying $\mathbf{A} \!=\! \mathbf{I} \!-\! \mathbf{B}$. 

Next, we exhibit the linear system which is solved for the desired gradients $\frac{\partial \mathbf{x}^{\star}}{\partial \mathbf{c}} (\mathbf{c})$ by unfolding at a fixed point. Consider the fixed-point  conditions of the iteration (\ref{eq:opt_iteration}):
\begin{equation}
    \label{eq:fixedpt_cond} \tag{FP}
    \mathbf{x}^{\star}(\mathbf{c}) = \mathcal{U}(\mathbf{x}^{\star}(\mathbf{c}), \; \mathbf{c} )
\end{equation}

Differentiating (\ref{eq:fixedpt_cond}) with respect to $\mathbf{c}$, 
\begin{equation}
\label{eq:diff_fixedpt_cond}
    \frac{\partial \mathbf{x}^{\star}}{\partial \mathbf{c}}(\mathbf{c}) = 
    \underbrace{{\frac{\partial \mathcal{U} }{\partial \mathbf{x}^{\star}} (\mathbf{x}^{\star}(\mathbf{c}), \; \mathbf{c} )}}_{\mathbf{\Phi}}
    	\cdot     
    \frac{\partial\mathbf{x}^{\star}}{\partial \mathbf{c}}(\mathbf{c}) + 
    \underbrace{\frac{\partial \mathcal{U} }{\partial \mathbf{c}} (\mathbf{x}^{\star}(\mathbf{c}), \; \mathbf{c} )}_{\mathbf{\Psi}}  
    ,
\end{equation}
by the chain rule and recognizing the implicit and explicit dependence of $\mathcal{U}$ on the independent parameters $\mathbf{c}$. Equation (\ref{eq:diff_fixedpt_cond}) will be called the \emph{differential fixed-point conditions}. 
Rearranging (\ref{eq:diff_fixedpt_cond}), the desired $\frac{\partial \mathbf{x}^{\star}}{\partial \mathbf{c}} (\mathbf{c})$ can be found in terms of $\mathbf{\Phi}$ and $\mathbf{\Psi}$ as defined above, to yield the system (\ref{eq:Lemma_fixedpt}) below.

The results discussed next are valid under the assumptions that $\mathbf{x}^{\star}\!\!:\!\mathbb{R}^n \!\to \!\mathbb{R}^n$ is differentiable in an open set $\mathcal{C}$, and Equation (\ref{eq:fixedpt_cond}) holds for $\mathbf{c} \in \mathcal{C}$. Additionally, $\mathcal{U}$ is assumed differentiable on an open set containing the point $(\mathbf{x}^{\star}( \mathbf{c} ), \mathbf{c})$. 

\begin{lemma}
\label{lemma:fixedpt-diff}
When $\mathbf{I}$ is the identity operator  and $\mathbf{\Phi}$ nonsingular, 
\begin{equation}
    \label{eq:Lemma_fixedpt}  \tag{DFP}
        (\mathbf{I} - \mathbf{\Phi} ) \frac{\partial \mathbf{x}^{\star}}{\partial \mathbf{c}} = \mathbf{\Psi} .
\end{equation}
\end{lemma}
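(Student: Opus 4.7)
The statement is essentially a direct consequence of applying the chain rule to the fixed-point condition \eqref{eq:fixedpt_cond}, which has already been foreshadowed by the derivation of \eqref{eq:diff_fixedpt_cond} in the prose preceding the lemma. My plan is therefore to formalize that derivation and verify that the differentiability hypotheses stated just before the lemma are exactly what is needed to make each step rigorous.

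First, I would start from the fixed-point identity $\mathbf{x}^{\star}(\mathbf{c}) = \mathcal{U}(\mathbf{x}^{\star}(\mathbf{c}), \mathbf{c})$, which holds on the open set $\mathcal{C}$ by assumption. I would view the right-hand side as the composition of the smooth map $\mathbf{c} \mapsto (\mathbf{x}^{\star}(\mathbf{c}), \mathbf{c})$ with $\mathcal{U}$, and note that the hypothesis that $\mathbf{x}^{\star}$ is differentiable on $\mathcal{C}$ and that $\mathcal{U}$ is differentiable on an open neighborhood of $(\mathbf{x}^{\star}(\mathbf{c}), \mathbf{c})$ is precisely what validates the multivariate chain rule at $\mathbf{c}$.

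Second, I would differentiate both sides with respect to $\mathbf{c}$, being careful to split the dependence of $\mathcal{U}$ on its two arguments: the implicit dependence through $\mathbf{x}^{\star}(\mathbf{c})$ contributes the term $\tfrac{\partial \mathcal{U}}{\partial \mathbf{x}^{\star}}(\mathbf{x}^{\star}(\mathbf{c}), \mathbf{c}) \cdot \tfrac{\partial \mathbf{x}^{\star}}{\partial \mathbf{c}}(\mathbf{c}) = \mathbf{\Phi} \tfrac{\partial \mathbf{x}^{\star}}{\partial \mathbf{c}}$, while the explicit dependence contributes $\tfrac{\partial \mathcal{U}}{\partial \mathbf{c}}(\mathbf{x}^{\star}(\mathbf{c}), \mathbf{c}) = \mathbf{\Psi}$. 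This yields exactly equation \eqref{eq:diff_fixedpt_cond}, namely $\tfrac{\partial \mathbf{x}^{\star}}{\partial \mathbf{c}} = \mathbf{\Phi}\tfrac{\partial \mathbf{x}^{\star}}{\partial \mathbf{c}} + \mathbf{\Psi}$.

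Finally, subtracting $\mathbf{\Phi}\tfrac{\partial \mathbf{x}^{\star}}{\partial \mathbf{c}}$ from both sides and factoring produces $(\mathbf{I} - \mathbf{\Phi})\tfrac{\partial \mathbf{x}^{\star}}{\partial \mathbf{c}} = \mathbf{\Psi}$, which is \eqref{eq:Lemma_fixedpt}. Since the result is a linear identity rather than an inversion, there is no genuine obstacle: the nonsingularity of $\mathbf{\Phi}$ is not needed to write down \eqref{eq:Lemma_fixedpt} itself but is recorded here to set up the connection to \eqref{eq:linear-iteration} in the next section (where the rearrangement $\tfrac{\partial \mathbf{x}^{\star}}{\partial \mathbf{c}} = \mathbf{\Phi}\tfrac{\partial \mathbf{x}^{\star}}{\partial \mathbf{c}} + \mathbf{\Psi}$ is iterated as an LFPI). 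The only subtle point worth double-checking is the bookkeeping of the Jacobians' shapes and the fact that $\mathbf{x}^{\star}$ is genuinely differentiable at $\mathbf{c}$ — a property that in applications is typically certified through an implicit function theorem argument on \eqref{eq:fixedpt_cond}, but which is granted here as a hypothesis.
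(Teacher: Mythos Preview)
Your proposal is correct and matches the paper's own argument: the paper derives \eqref{eq:diff_fixedpt_cond} by chain rule exactly as you describe and then states that \eqref{eq:Lemma_fixedpt} follows by rearrangement, attributing the justification to the Implicit Function theorem. Your observation that the nonsingularity hypothesis on $\mathbf{\Phi}$ is not actually used to obtain the linear identity itself is also accurate; the paper records it for the downstream use in Theorem~\ref{thm:unfolding_convergence_fixedpt} rather than for this lemma.
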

The result follows from the Implicit Function theorem \citep{munkres2018analysis}. It implies that the Jacobian $\frac{\partial \mathbf{x}^{\star}}{\partial \mathbf{c}}$ can be found as the solution to a linear system once the prerequisite Jacobians $\mathbf{\Phi}$ and $\mathbf{\Psi}$ are found; these correspond to backpropagation of the update function $\mathcal{U}$ at $\mathbf{x}^{\star}(\mathbf{c})$.

\section{Folded Optimization}
\label{sec:FoldedOptimization}

We are now ready to discuss the central result of the paper. 
Informally, it states that the backward pass of an iterative solver (\ref{eq:opt_iteration}), unfolded at a precomputed optimal solution $\mathbf{x}^{\star}(\mathbf{c})$, is equivalent to solving the linear equations (\ref{eq:Lemma_fixedpt}) using linear fixed-point iteration, as outlined in Lemma \ref{lemma:linear-iteration}. 

This has significant implications for unrolling optimization. It shows that backpropagation of unfolding is computationally equivalent to solving linear equations using a specific algorithm and does not require automatic differentiation.
It also provides insight into the convergence properties of this backpropagation, including its convergence rate, and shows that more efficient algorithms can be used to solve (\ref{eq:Lemma_fixedpt}) in favor of its inherent LFPI implementation in unfolding.

The following results hold under the assumptions that 
the parameterized optimization mapping $\mathbf{x}^{\star}$ converges for certain parameters $\mathbf{c}$ through a sequence of iterates $\mathbf{x}_k(\mathbf{c}) \to \mathbf{x}^{\star}(\mathbf{c})$ using algorithm (\ref{eq:opt_iteration}), 
and that $\mathbf{\Phi}$ is nonsingular with a spectral radius $\rho(\mathbf{\Phi}) < 1$.

\begin{theorem}
\label{thm:unfolding_convergence_fixedpt}
The backward pass of an unfolding of algorithm (\ref{eq:opt_iteration}), starting at the point $\mathbf{x}_{k} = \mathbf{x}^{\star}$, is equivalent to linear fixed-point iteration on the linear system (\ref{eq:Lemma_fixedpt}), and will converge to its unique solution   at an asymptotic rate of
\begin{equation}
    - \log \rho(\mathbf{\Phi}).
\end{equation}

\end{theorem}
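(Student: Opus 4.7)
The plan is to derive the backward-pass recursion explicitly by applying the chain rule to each unfolded step, observe that at a fixed point the relevant Jacobians collapse to the constant matrices $\mathbf{\Phi}$ and $\mathbf{\Psi}$, and then invoke Lemma~\ref{lemma:linear-iteration} to obtain both convergence and the claimed rate.

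First, I would differentiate the update $\mathbf{x}_{k+1} = \mathcal{U}(\mathbf{x}_k, \mathbf{c})$ with respect to $\mathbf{c}$ using the chain rule, giving
\[
\frac{\partial \mathbf{x}_{k+1}}{\partial \mathbf{c}} \;=\; \frac{\partial \mathcal{U}}{\partial \mathbf{x}}(\mathbf{x}_k, \mathbf{c}) \, \frac{\partial \mathbf{x}_k}{\partial \mathbf{c}} \;+\; \frac{\partial \mathcal{U}}{\partial \mathbf{c}}(\mathbf{x}_k, \mathbf{c}).
\]
Since the forward pass is initialized at $\mathbf{x}_0 = \mathbf{x}^\star(\mathbf{c})$ and $\mathcal{U}(\mathbf{x}^\star, \mathbf{c}) = \mathbf{x}^\star$ by (\ref{eq:fixedpt_cond}), we have $\mathbf{x}_k \equiv \mathbf{x}^\star$ for all $k$, so the two Jacobians evaluated along the trajectory are the constant matrices $\mathbf{\Phi}$ and $\mathbf{\Psi}$ defined in (\ref{eq:diff_fixedpt_cond}). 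Setting $\mathbf{J}_k \coloneqq \partial \mathbf{x}_k / \partial \mathbf{c}$, the backward recursion becomes
\[
\mathbf{J}_{k+1} \;=\; \mathbf{\Phi}\, \mathbf{J}_k \;+\; \mathbf{\Psi},
\]
which is precisely the matrix-valued form of the linear fixed-point iteration (\ref{eq:linear-iteration}) applied to $\mathbf{J} = \mathbf{\Phi}\mathbf{J} + \mathbf{\Psi}$, i.e.\ to the system (\ref{eq:Lemma_fixedpt}) after rearrangement.

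Next, I would apply Lemma~\ref{lemma:linear-iteration} column by column: each column of $\mathbf{J}_k$ evolves independently under the same iteration matrix $\mathbf{\Phi}$, so under the hypothesis $\rho(\mathbf{\Phi}) < 1$ each column converges to the corresponding column of the unique solution of (\ref{eq:Lemma_fixedpt}) at asymptotic rate $-\log \rho(\mathbf{\Phi})$. By Lemma~\ref{lemma:fixedpt-diff} this unique solution is $\partial \mathbf{x}^\star / \partial \mathbf{c}$, which is what backward-pass convergence (\ref{eq:limit-x-diff}) requires.

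The main obstacle I anticipate is bookkeeping around the meaning of ``the backward pass'': one must argue that the sequence of Jacobians produced by automatic differentiation of the unrolled program coincides with the forward-mode sequence $\{\mathbf{J}_k\}$ derived above, and that the limit does not depend on the choice of $\mathbf{J}_0$. The first point reduces to the standard observation that forward- and reverse-mode AD compute the same total Jacobian of a composition via the multivariate chain rule. The second is handled by unrolling the affine recursion as $\mathbf{J}_k = \mathbf{\Phi}^k \mathbf{J}_0 + \sum_{i=0}^{k-1} \mathbf{\Phi}^i \mathbf{\Psi}$ and observing that $\mathbf{\Phi}^k \to \mathbf{0}$ whenever $\rho(\mathbf{\Phi}) < 1$, so whether $\mathbf{x}^\star$ is treated as a detached constant ($\mathbf{J}_0 = \mathbf{0}$) or as a function of $\mathbf{c}$ is immaterial to the limit.
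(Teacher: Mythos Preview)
Your proposal is correct and follows essentially the same route as the paper: differentiate the update rule by the chain rule, observe that at the fixed point the two partial Jacobians reduce to the constants $\mathbf{\Phi}$ and $\mathbf{\Psi}$, obtain the affine recursion $\mathbf{J}_{k+1} = \mathbf{\Phi}\mathbf{J}_k + \mathbf{\Psi}$, and invoke Lemma~\ref{lemma:linear-iteration}. The paper additionally fixes the initial condition $\mathbf{J}_0$ by noting that $\mathbf{x}^\star$ is precomputed off-graph so $\partial \mathbf{x}_0/\partial \mathbf{c} = \mathbf{0}$, whereas you handle this more generally via $\mathbf{\Phi}^k \mathbf{J}_0 \to \mathbf{0}$; both are fine and lead to the same conclusion.
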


\begin{proof}
Since $\mathcal{U}$ converges given any parameters $\mathbf{c} \in \mathcal{C}$,  Equation (\ref{eq:fixedpt_cond}) holds for any $\mathbf{c} \in \mathcal{C}$. Together with the assumption the $\mathcal{U}$ is differentiable on a neighborhood of $(\mathbf{x}^{\star}( \mathbf{c} ),\; \mathbf{c})$,
\begin{equation}
        (\mathbf{I} - \mathbf{\Phi} ) \frac{\partial \mathbf{x}^{\star}}{\partial \mathbf{c}} = \mathbf{\Psi}
\end{equation}
holds by Lemma \ref{lemma:fixedpt-diff}. 
When (\ref{eq:opt_iteration}) is unfolded, its backpropagation rule can be derived by differentiating its update rule:
\begin{subequations}
\begin{align}
        \label{eq:diff_update_1}
        \frac{\partial}{\partial \mathbf{c}} \left[ \; \mathbf{x}_{k+1}(\mathbf{c}) \;\right]  &=   \frac{\partial }{\partial \mathbf{c}}  \left[ \; \mathcal{U}(\mathbf{x}_k (\mathbf{c}) ,\;  \mathbf{c} )   \;\right] \\
        \label{eq:diff_update_2}
        \frac{\partial \mathbf{x}_{k+1}}{\partial \mathbf{c}}\; (\mathbf{c}) &=          \frac{\partial \mathcal{U}}{\partial \mathbf{x}_k} \frac{\partial \mathbf{x}_k}{\partial \mathbf{c}} + 
        \frac{\partial \mathcal{U}}{\partial \mathbf{c}},
\end{align}
\end{subequations}
where all terms on the right-hand side are evaluated at $\mathbf{c}$ and $\mathbf{x}_k (\mathbf{c})$. Note that in the base case $k=0$, since in general $\mathbf{x}_{0}$ is arbitrary and does not depend on $\mathbf{c}$, $\frac{\partial \mathbf{x}_0}{\partial \mathbf{c}} = \mathbf{0}$ and  
\begin{equation}
\label{eq:initial-update-diff}
        \frac{ \partial \mathbf{x}_{1}}{\partial \mathbf{c}}(\mathbf{c}) =
        \frac{\partial \mathcal{U}}{\partial \mathbf{c}}(\mathbf{x}_{0}, \mathbf{c}).
\end{equation}
This holds also when $\mathbf{x}_0 \!=\! \mathbf{x}^{\star}$ w.r.t.~backpropagation of (\ref{eq:opt_iteration}), since $\mathbf{x}^{\star}$ is precomputed outside the computational graph of its unfolding.  Now since $ \mathbf{x}^{\star}$ is a fixed point of (\ref{eq:opt_iteration}), 
\begin{equation}
\label{eq:xstar_all_k}
        \mathbf{x}_k(\mathbf{c}) = \mathbf{x}^{\star}(\mathbf{c})  \;\;\; \forall k \geq 0,
\end{equation}
which implies 
\begin{subequations}
\begin{align}
\label{eq:du_dxstar_all_k}
\frac{\partial \mathcal{U}}{\partial \mathbf{x}_k}(\mathbf{x}_k( \mathbf{c}), \; \mathbf{c} ) &= \frac{\partial \mathcal{U}}{\partial \mathbf{x}^{\star}}(\mathbf{x}^{\star}( \mathbf{c}), \; \mathbf{c} ) = \mathbf{\Phi}, \;\;\; \forall k \geq 0\\ 
\label{eq:du_dxstar_all_k2}
\frac{\partial \mathcal{U}}{\partial \mathbf{c}}(\mathbf{x}_k( \mathbf{c}), \; \mathbf{c} ) &= \frac{\partial \mathcal{U}}{\partial \mathbf{c}}(\mathbf{x}^{\star}( \mathbf{c}), \; \mathbf{c} ) = \mathbf{\Psi}, \;\;\; \forall k \geq 0.
\end{align}
\end{subequations}
Letting $\mathbf{J}_{k} \!\coloneqq\! \frac{\partial \mathbf{x}_k}{\partial \mathbf{c}}(\mathbf{c})$, the rule (\ref{eq:diff_update_2}) for unfolding at a fixed-point $\mathbf{x}^{\star}$ becomes, along with initial conditions (\ref{eq:initial-update-diff}), 
\begin{subequations}
\label{eq:proof_backprop}
\begin{align}
        \mathbf{J}_{0} &=  \mathbf{\Psi}\\
        \mathbf{J}_{k+1} & =
         \mathbf{\Phi} \mathbf{J}_{k} + 
        \mathbf{\Psi}.
\end{align}    
\end{subequations}
The result then holds by direct application of Lemma \ref{lemma:linear-iteration} to (\ref{eq:proof_backprop}), recognizing $\mathbf{z}_k = \mathbf{J}_{k}$ , $\mathbf{B} = \mathbf{\Phi}$  and  $\mathbf{z}_0 = \mathbf{b} = \mathbf{\Psi}$.
\end{proof}

\noindent The following is a direct result from the proof of Theorem~\ref{thm:unfolding_convergence_fixedpt}.
\begin{corollary}
Backpropagation of the fixed-point unfolding consists of the following rule:
\begin{subequations}
\label{eq:update-diff-fixed}
\begin{align}
        \mathbf{J}_{0} &=  \mathbf{\Psi}\\
        \mathbf{J}_{k+1} & =
         \mathbf{\Phi} \mathbf{J}_{k} + 
        \mathbf{\Psi},
\end{align}    
\end{subequations}
where  $\mathbf{J}_{k} \coloneqq \frac{\partial \mathbf{x}_k}{\partial \mathbf{c}}(\mathbf{c})$.
\end{corollary}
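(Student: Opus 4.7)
The statement is essentially a re-packaging of the recurrence that already appeared in the proof of Theorem~\ref{thm:unfolding_convergence_fixedpt}, so my plan is to simply extract and relabel the equations derived there. Concretely, I would apply the multivariate chain rule to the forward-pass update $\mathbf{x}_{k+1}(\mathbf{c}) = \mathcal{U}(\mathbf{x}_k(\mathbf{c}),\mathbf{c})$, yielding
\begin{equation*}
\frac{\partial \mathbf{x}_{k+1}}{\partial \mathbf{c}} \;=\; \frac{\partial \mathcal{U}}{\partial \mathbf{x}_k}\,\frac{\partial \mathbf{x}_k}{\partial \mathbf{c}} \;+\; \frac{\partial \mathcal{U}}{\partial \mathbf{c}},
\end{equation*}
which is exactly Equation~(\ref{eq:diff_update_2}) in the theorem's proof.

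Next I would invoke the fixed-point hypothesis: because $\mathbf{x}_0=\mathbf{x}^{\star}$ and $\mathcal{U}(\mathbf{x}^{\star},\mathbf{c})=\mathbf{x}^{\star}$, a trivial induction gives $\mathbf{x}_k(\mathbf{c}) = \mathbf{x}^{\star}(\mathbf{c})$ for all $k \geq 0$. Consequently, the partial derivatives $\partial \mathcal{U}/\partial \mathbf{x}_k$ and $\partial \mathcal{U}/\partial \mathbf{c}$ are evaluated at the same point $(\mathbf{x}^{\star}(\mathbf{c}),\mathbf{c})$ at every iteration, so they equal the constants $\mathbf{\Phi}$ and $\mathbf{\Psi}$ defined in Section~\ref{sec:Unfolding_at_a_fixed_point}. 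Substituting these constants into the chain-rule expression above yields the recursion $\mathbf{J}_{k+1} = \mathbf{\Phi}\,\mathbf{J}_k + \mathbf{\Psi}$.

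For the initial condition, I would argue as in Equation~(\ref{eq:initial-update-diff}) of the theorem's proof: the starting point $\mathbf{x}_0 = \mathbf{x}^{\star}$ is precomputed by a blackbox solver outside the computational graph of the unfolding, and so is treated as a constant with respect to $\mathbf{c}$. Hence $\partial \mathbf{x}_0/\partial \mathbf{c} = \mathbf{0}$, and one application of the chain rule gives a first Jacobian equal to $\mathbf{\Psi}$. Re-indexing this first nontrivial Jacobian as $\mathbf{J}_0$ yields the stated base case $\mathbf{J}_0 = \mathbf{\Psi}$.

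I do not expect any real obstacle here, since all the ingredients are already assembled inside the proof of Theorem~\ref{thm:unfolding_convergence_fixedpt}. The only point worth stating carefully is the justification that $\partial \mathbf{x}_0/\partial \mathbf{c} = \mathbf{0}$, which relies on the conceptual separation between the blackbox forward solver and the unfolded differentiable procedure; once this is granted, the corollary is obtained by reading off Equations~(\ref{eq:proof_backprop}) verbatim.
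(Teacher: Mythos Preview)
Your proposal is correct and follows exactly the paper's approach: the paper does not give a separate proof of the corollary but simply states that it is ``a direct result from the proof of Theorem~\ref{thm:unfolding_convergence_fixedpt},'' and the equations you extract are precisely Equations~(\ref{eq:diff_update_2}), (\ref{eq:initial-update-diff}), (\ref{eq:du_dxstar_all_k})--(\ref{eq:du_dxstar_all_k2}), and (\ref{eq:proof_backprop}) from that proof. Your remark about re-indexing the first nontrivial Jacobian as $\mathbf{J}_0$ is a fair observation, since the paper's own notation $\mathbf{J}_k \coloneqq \partial \mathbf{x}_k/\partial \mathbf{c}$ together with $\partial \mathbf{x}_0/\partial \mathbf{c}=\mathbf{0}$ would literally give $\mathbf{J}_0=\mathbf{0}$ rather than $\mathbf{\Psi}$; the paper glosses over this shift without comment.
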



Theorem \ref{thm:unfolding_convergence_fixedpt} specifically applies to the case where the initial iterate is the precomputed optimal solution, $\mathbf{x}_0 = \mathbf{x}^{\star}$. However, it also has implications for the general case where $\mathbf{x}_0$ is arbitrary. As the forward pass optimization converges, i.e. $\mathbf{x}_k \to \mathbf{x}^{\star}$ as $k \to \infty$, this case becomes identical to the one proved in Theorem \ref{thm:unfolding_convergence_fixedpt} and a similar asymptotic convergence result applies. If $\mathbf{x}_k \to \mathbf{x}^{\star}$ and $\mathbf{\Phi}$ is a nonsingular operator with $\rho(\mathbf{\Phi}) < 1$, the following result holds.
\begin{corollary}
\label{corr-unfolding-convergence}
When the parametric problem (\ref{eq:opt_generic}) can be solved by an iterative method of the form (\ref{eq:opt_iteration}) and the forward pass of the unfolded algorithm converges, the backward pass converges at an asymptotic rate that is bounded by $-\log \rho(\mathbf{\Phi})$.
\end{corollary}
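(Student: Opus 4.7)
The plan is to reduce the general case to the fixed-point case already handled in Theorem \ref{thm:unfolding_convergence_fixedpt} by recognizing that, as the forward pass converges, the backward pass recurrence becomes asymptotically identical to the linear fixed-point iteration analyzed there. First, I would rewrite the general unrolled backward pass rule (\ref{eq:diff_update_2}) as a time-varying linear recurrence
\[
\mathbf{J}_{k+1} = \mathbf{\Phi}_k \mathbf{J}_k + \mathbf{\Psi}_k,
\]
where $\mathbf{\Phi}_k := \frac{\partial \mathcal{U}}{\partial \mathbf{x}_k}(\mathbf{x}_k,\mathbf{c})$ and $\mathbf{\Psi}_k := \frac{\partial \mathcal{U}}{\partial \mathbf{c}}(\mathbf{x}_k,\mathbf{c})$. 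Continuity of the partial derivatives of $\mathcal{U}$ on a neighborhood of $(\mathbf{x}^\star(\mathbf{c}),\mathbf{c})$, combined with the hypothesis $\mathbf{x}_k \to \mathbf{x}^\star$, yields $\mathbf{\Phi}_k \to \mathbf{\Phi}$ and $\mathbf{\Psi}_k \to \mathbf{\Psi}$.

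Next, I would introduce the error $\mathbf{E}_k := \mathbf{J}_k - \mathbf{J}^\star$ against the unique solution $\mathbf{J}^\star = (\mathbf{I}-\mathbf{\Phi})^{-1}\mathbf{\Psi}$ of (\ref{eq:Lemma_fixedpt}) from Lemma \ref{lemma:fixedpt-diff}. A direct substitution yields
\[
\mathbf{E}_{k+1} = \mathbf{\Phi}_k \mathbf{E}_k + \mathbf{\eta}_k, \quad \mathbf{\eta}_k := (\mathbf{\Phi}_k - \mathbf{\Phi})\mathbf{J}^\star + (\mathbf{\Psi}_k - \mathbf{\Psi}),
\]
with $\mathbf{\eta}_k \to 0$ by the convergences above. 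This exhibits the backward pass as a \emph{perturbed} linear fixed-point iteration whose drift matrix tends to $\mathbf{\Phi}$ and whose additive residual vanishes, placing it in a regime asymptotically governed by $\mathbf{\Phi}$ alone.

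To quantify the asymptotic rate, I would deploy a standard matrix-norm argument: for any $\delta > 0$ there exists a sub-multiplicative norm in which $\|\mathbf{\Phi}\| \le \rho(\mathbf{\Phi}) + \delta$, and by continuity together with $\mathbf{\Phi}_k \to \mathbf{\Phi}$, we obtain $\|\mathbf{\Phi}_k\| \le \rho(\mathbf{\Phi}) + 2\delta$ for all sufficiently large $k$, which is strictly below one once $\delta$ is chosen small enough given $\rho(\mathbf{\Phi})<1$. Iterating $\|\mathbf{E}_{k+1}\| \le (\rho(\mathbf{\Phi}) + 2\delta)\|\mathbf{E}_k\| + \|\mathbf{\eta}_k\|$ and using $\|\mathbf{\eta}_k\| \to 0$, a geometric-plus-vanishing-residual bound then gives $\|\mathbf{E}_k\| \to 0$ at an asymptotic rate controlled by $\rho(\mathbf{\Phi}) + 2\delta$; letting $\delta \to 0$ delivers the claimed bound $-\log\rho(\mathbf{\Phi})$.

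The main obstacle I anticipate is the non-autonomous character of the perturbed recurrence: since $\mathbf{\Phi}_k$ itself varies with $k$, Lemma \ref{lemma:linear-iteration} cannot be invoked directly, and one must ensure that neither the drifting coefficient matrix nor the vanishing residuals $\mathbf{\eta}_k$ corrupt the asymptotic rate predicted by the fixed-point case. This is precisely where the continuity of the partial derivatives of $\mathcal{U}$, the same assumption underpinning Theorem \ref{thm:unfolding_convergence_fixedpt}, plays its essential role.
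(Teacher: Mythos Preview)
Your proposal is correct and follows the same underlying idea as the paper, namely that once $\mathbf{x}_k \to \mathbf{x}^{\star}$ the backward recurrence becomes asymptotically the fixed-point iteration of Theorem~\ref{thm:unfolding_convergence_fixedpt}. In fact, the paper does not supply a formal proof for this corollary at all: its entire justification is the single sentence that ``as the forward pass optimization converges \ldots\ this case becomes identical to the one proved in Theorem~\ref{thm:unfolding_convergence_fixedpt} and a similar asymptotic convergence result applies.'' Your argument makes that heuristic precise by writing the backward pass as the non-autonomous recurrence $\mathbf{J}_{k+1}=\mathbf{\Phi}_k\mathbf{J}_k+\mathbf{\Psi}_k$, passing to the error $\mathbf{E}_k$, and running the standard $\epsilon$-norm perturbation argument. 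This is exactly the kind of detail the paper's informal claim presupposes but does not spell out, and it correctly identifies the additional regularity needed (continuity of the partial derivatives of $\mathcal{U}$ near the fixed point, slightly beyond the mere differentiability stated in the paper's standing assumptions).
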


\noindent The result above helps explain why the forward and backward pass in the experiment of Section \ref{sec:Unfolding_at_a_fixed_point} converge at different rates. If the forward pass converges faster than $-\log\; \rho(\mathbf{\Phi})$, the \emph{overall convergence rate of an unfolding is limited by that of the backward pass}.

\paragraph{Fixed-Point Folding.}
\label{sec:fixed_point_folding}
To improve efficiency, and building on the above findings, we propose to replace unfolding at the fixed point $\mathbf{x}^{\star}$ with the equivalent Jacobian-gradient product following the solution of (\ref{eq:Lemma_fixedpt}).
This leads to \emph{fixed-point folding}, a system for converting \emph{any} unrolled implementation of an optimization method (\ref{eq:opt_iteration}) into a \emph{folded optimization} that \emph{eliminates unrolling entirely}. By leveraging AD through a single step of the unrolled solver, but avoiding the use of AD to unroll through multiple iterations on the computational graph, it enables backpropagation of optimization layers by JgP using a seamless integration of automatic and analytical differentiation. Its modularization of the forward and backward passes, which are typically intertwined in unrolling, also allows for efficient blackbox implementations of each pass.

It is important to note that as per Definition \ref{def:unfolding}, the innermost optimization loop of a nested unrolling can be considered an unfolding and can be backpropagated by JgP with fixed-point folding. Subsequently, the next innermost loop can now be considered unfolded and the same process applied until all unrolled optimization loops are replaced with their analytical models. Figure \ref{fig:unfolding_scheme} depicts fixed-point folding,  where the gray arrows symbolize a blackbox forward pass and the long red arrows illustrate that a backpropagation is performed an iterative linear system solver. 
The procedure is also exemplified by \textit{f-PGDb} (introduced in Section 6), which applies successive fixed-point folding through ADMM and PGD to compose a JgP-based differentiable layer for any optimization problem with a smooth objective function and linear constraints.

Note that although it is not used for forward pass convergence, a folded optimizer still typically requires selecting a constant stepsize, or similar parameter, to specify $\mathcal{U}$ and the resulting Jacobian model \eqref{eq:Lemma_fixedpt}. This can affect $\rho(\mathbf{\Phi})$, and hence the backward pass convergence and its rate by Theorem~\ref{thm:unfolding_convergence_fixedpt}. A further discussion of the aspect is made in Appendix \ref{appendix:stepsize}.

\begin{figure*}
    \centering
    \vspace{-2pt}
     \includegraphics[width=0.32\textwidth]{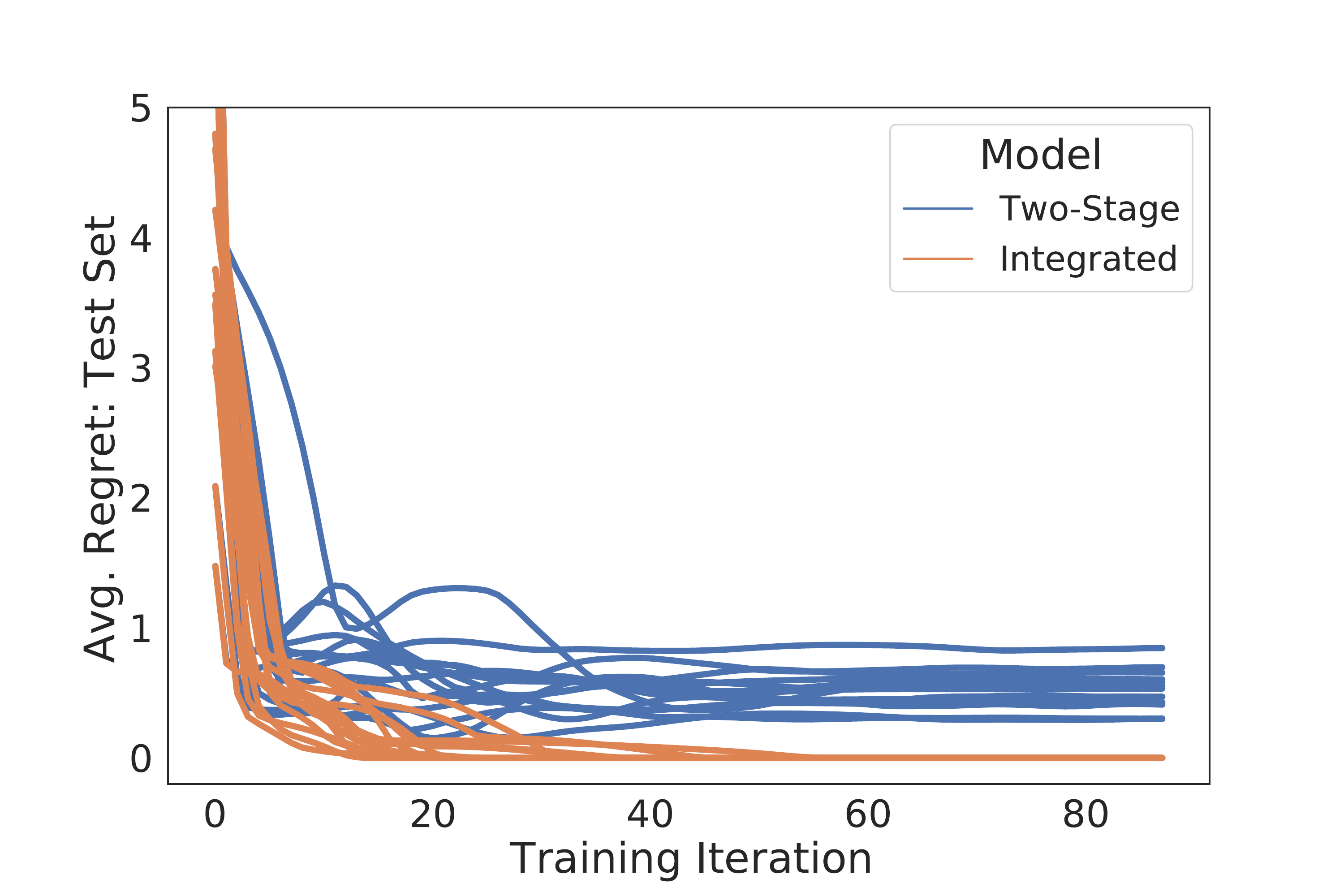}
   \includegraphics[width=0.32\linewidth]{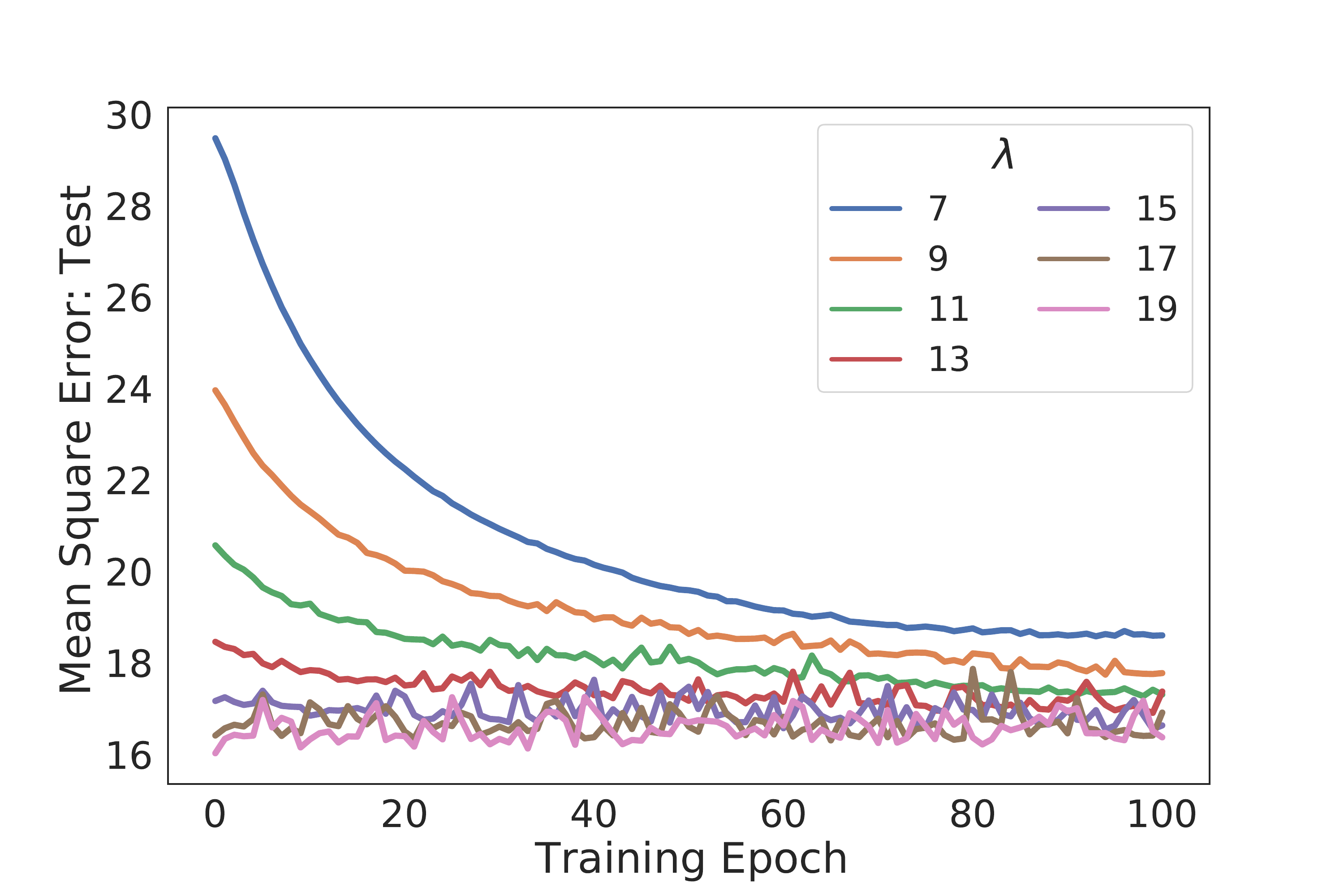} 
    \includegraphics[width=0.32\linewidth]{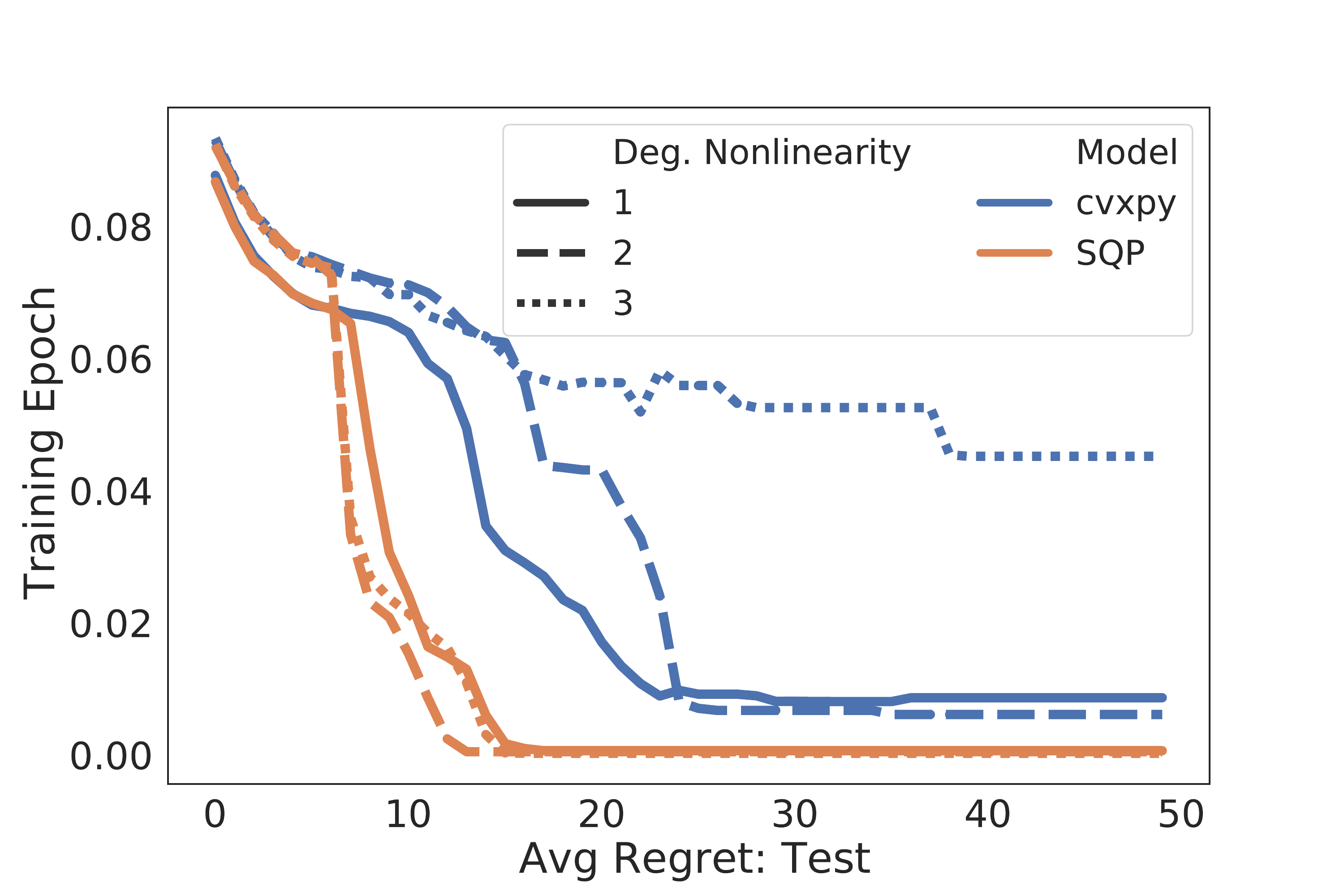}
   \\
  {\small \hspace{0pt} (a) \hspace{.3\linewidth} (b) \hspace{.3\linewidth} 
  (c) \vspace{-0pt}}
    \caption{Bilinear decision focus (a),  Enhanced Denoising with \textit{f-FDPG} (b), and Portfolio optimization (c).}
    \label{fig:results}
\end{figure*}

\section{Experiments}
\label{sec:Experiments}

This section evaluates folded optimization 
on four end-to-end optimization and learning tasks. It is primarily evaluated against \texttt{cvxpy}, which is the preeminent general-purpose differentiable optimization solver. Two crucial limitations of \texttt{cvxpy} are its efficiency and expressiveness. This is due to its reliance on transforming general optimization programs to convex cone programs, before applying a standardized operator-splitting cone program solver and differentiation scheme (see Appendix \ref{app:RelatedWork}). 
This precludes the incorporation of problem-specific solvers in the forward pass  and limits its use to convex problems only. 
One major benefit of \texttt{fold-opt} is the modularity of its forward optimization pass, which can apply any black-box algorithm to produce $\mathbf{x}^{\star}(\mathbf{c})$. In each experiment below, this is used to demonstrate a different advantage. 

A summary of results is provided below for each study, and a more complete specification is provided in Appendix \ref{appendix:experimental}.

\paragraph{Implementation details.}
All the folded optimizers used in this section were produced using the accompanying Python library \texttt{fold-opt}, which supplies routines for constructing and solving the system (\ref{eq:Lemma_fixedpt}), and integrating the resulting Jacobian-vector products into the computational graph of PyTorch. To do so, it requires a Pytorch implementation of an update function $\mathcal{U}$ for an appropriately chosen optimization routine.  The linear system  (\ref{eq:Lemma_fixedpt}) is be solved by a user-specifed blackbox linear solver, as is the forward-pass optimization solver, as discussed in Section \ref{sec:Unfolding_at_a_fixed_point}. Implementation details of \texttt{fold-opt} can be found in Appendix \ref{appendix:implementation}.

The experiments test four folded optimizers: 
\textbf{(1)} \textit{f-PGDa} applies to optimization mappings with linear constraints, and is based on folding projected gradient descent steps, 
where each inner projection is a QP solved by the differentiable QP solver \texttt{qpth} \citep{amos2019optnet}. 
\textbf{(2)} \textit{f-PGDb} is a variation on the former, in which the inner QP step is differentiated by fixed-point folding of the ADMM solver detailed in Appendix \ref{appendix:models}. 
\textbf{(3)} \textit{f-SQP} applies to optimization with nonlinear constraints and uses folded SQP with the inner QP differentiated by \texttt{qpth}. 
\textbf{(4)} \textit{f-FDPG} comes from fixed-point folding of the Fast Dual Proximal Gradient Descent (FDPG) shown in Appendix \ref{appendix:models}. 
The inner $\operatorname{Prox}$ is a soft thresholding operator, whose simple closed form is differentiated by AD in PyTorch.

\paragraph{Decision-focused learning with nonconvex bilinear programming.}
\label{subsec:bilinear}
The first experiment showcases the ability of folded optimization to be applied in decision-focused learning with \emph{nonconvex} optimization. 
In this experiment, we predict the coefficients of a \emph{bilinear} program
\begin{align*}
    \mathbf{x}^{\star}(\mathbf{c}, \mathbf{d}) = \argmax_{\mathbf{0} \leq \mathbf{x}, \mathbf{y}  \leq \mathbf{1} } &\;\;
    \mathbf{c}^T \mathbf{x} + \mathbf{x}^T \mathbf{Q} \mathbf{y} +\mathbf{d}^T \mathbf{y}    \\
    \texttt{s.t.} \;\; 
    & \sum \mathbf{x} = p, \; \sum \mathbf{y} = q,
\end{align*}
in which two separable linear programs are confounded by a nonconvex quadratic objective term $\bm{Q}$. Costs $\bm{c}$ and $\bm{d}$ are predicted by a 5-layer network, while $p$ and $q$ are constants.
Such programs have numerous industrial applications such as optimal mixing and pooling in gas refining \citep{audet2004pooling}. 
Here we focus on the difficulty posed by the problem's form and propose a task to evaluate \textit{f-PGDb} in learning with nonconvex optimization. Feature and cost data are generated by the process described in Appendix \ref{appendix:experimental}, 
along with $15$ distinct $\mathbf{Q}$ for a collection of nonconvex decision models.

It is known that PGD converges to local optima in nonconvex problems \citep{attouch2013convergence}, and this folded implementation uses the Gurobi nonconvex QP solver to find a global optimum. Since no known general framework can accommodate nonconvex optimization mappings in end-to-end models,  we benchmark against the \emph{two-stage} approach, in which the costs $\mathbf{c}$, and $\mathbf{d}$ are targeted to ground-truth costs by MSE loss and the optimization problem is solved as a separate component from the learning task (see Appendix \ref{appendix:DFL} 
for additional details). The integrated  \textit{f-PGDb} model minimizes solution regret (i.e., suboptimality) directly.
\citep{elmachtoub2020smart}. Notice  in Figure \ref{fig:results}(a) how \textit{f-PGDb} achieves much lower regret for each of the $15$ nonconvex objectives.

\paragraph{Enhanced Total Variation Denoising.}
\label{subsec:denoising_experiment}
This experiment illustrates the efficiency benefit of incorporating problem-specific solvers. The optimization models a denoiser 
\[
    \mathbf{x}^{\star}(\mathbf{D}) = \argmin_{\mathbf{x}} \;\;
    \frac{1}{2} \| \mathbf{x}-\mathbf{d} \|^2 +  \lambda \| \mathbf{D} \mathbf{x} \|_1,
\]
which seeks to recover the true signal $\mathbf{x}^{\star}$ from  a noisy input $\mathbf{d}$ and is often best handled by variants of Dual Proximal Gradient Descent. Classically, $\mathbf{D}$ is a differencing matrix so that $\| \mathbf{D} \mathbf{x} \|_1$ represents total variation. 
Here we initialize $\mathbf{D}$ to this classic case and \emph{learn} a better $\mathbf{D}$ by targeting a set of true signals with MSE loss and adding Gaussian noise to generate their corresponding noisy inputs. Figure \ref{fig:results}(b) shows test MSE throughout training due to \textit{f-FDPG} for various choice of $\lambda$.  Appendix \ref{appendix:Figures} 
shows comparable results from the framework of \cite{amos2019optnet}, which converts the problem to a QP form (see Appendix \ref{appendix:models}) 
in order to differentiate the mapping analytically with \texttt{qpth}. Small differences in these results likely stem from solver error tolerance in the two methods.
However, \emph{\textit{f-FDPG} computes $\mathbf{x}^{\star}(\mathbf{D})$ up to $40$ times faster}. 

\paragraph{Mutilabel Classification on CIFAR100.}
\label{subsec:multilabel_experiment}
Since gradient errors accumulate at each training step, we ask how precise are the operations performed by \texttt{fold-opt} in the backward pass. This experiment compares the backpropagation of both \textit{f-PGDa} and \textit{f-SQP}  with that of \texttt{cvxpy}, by using the forward pass of \texttt{cvxpy} in each model as a control factor. 

This experiment, adapted from \citep{Berrada2018SmoothLF}, implements a smooth top-5 classification model on noisy CIFAR-100. The optimization below maps image feature embeddings $\mathbf{c}$ from DenseNet  $40$-$40$ \citep{huang2017densely}, to smoothed top-$k$ binary class indicators (see Appendix \ref{appendix:experimental} 
for more details): 
\begin{align}
\label{eq:topk-lp}
    \mathbf{x}^{\star}(\mathbf{c}) \!=\! 
    \argmax_{\mathbf{0} \leq \mathbf{x} \leq \mathbf{1}} &\;\;
    \mathbf{c}^T \mathbf{x} + \sum_i x_i \log x_i \;\;
    \textsl{s.t.} \;
    \sum \mathbf{x} = k 
\end{align}
Appendix \ref{appendix:Figures} 
shows that all three models have indistinguishable classification accuracy throughout training, indicating the backward pass of both \texttt{fold-opt} models is precise and agrees with a known benchmark even after 30 epochs of training on 45k samples. On the other hand, the more sensitive test set shows marginal accuracy divergence after a few epochs.

\paragraph{Portfolio Prediction and Optimization.}
\label{subsec:portfolio_experiment}
Having established the equivalence in performance of the backward pass across these models, the final experiment describes a situation in which \texttt{cvxpy} makes non negligible errors in the forward pass of a problem with nonlinear constraints:
\begin{align}
\label{eq:porfolio}
    \mathbf{x}^\star(\mathbf{c}) = \argmax_{\mathbf{0} \leq \mathbf{x} } 
    \mathbf{c}^T \mathbf{x}\;\;
    \textsl{s.t.} \;
     \mathbf{x}^T \mathbf{V} \mathbf{x} \leq \gamma, \; \sum \mathbf{x} = 1.
\end{align}
This model describes a risk-constrained portfolio optimization where $\bm{V}$ is a covariance matrix, and the predicted cost coefficients $\mathbf{c}$ represent assets prices \citep{elmachtoub2020smart}. 
A  $5$-layer ReLU network is used to predict future prices $\bm{c}$ from exogenous feature data, and trained to minimize regret (the difference in profit between optimal portfolios under predicted and ground-truth prices) by integrating Problem (\ref{eq:porfolio}).
The folded \textit{f-SQP} layer used for this problem employs Gurobi QCQP solver in its forward pass. This again highlights the ability of \texttt{fold-opt} to accommodate a highly optimized blackbox solver. 
Figure \ref{fig:results}(c)   shows test set regret  throughout training, three synthetically generated datasets of different nonlinearity degrees. 
Notice the accuracy improvements of  \texttt{fold-opt} over ~\texttt{cvxpy}. 
Such dramatic differences can be explained by non-negligible errors made in \texttt{cvxpy}'s forward pass optimization on some problem instances, which occurs regardless of error tolerance settings (please see Appendix \ref{appendix:experimental} 
for details). In contrast, Gurobi agrees to machine precision with a custom SQP solver, and solves about $50 \%$ faster than \texttt{cvxpy}. This shows the importance of highly accurate optimization solvers for accurate end-to-end training.

\section{Conclusions}
This paper introduced folded optimization, a framework for generating analytically differentiable optimization solvers from unrolled implementations. Theoretically, folded optimization was justified by a novel analysis of unrolling at a precomputed optimal solution, which showed that its backward pass is equivalent to solution of a solver's differential fixed-point conditions, specifically by fixed-point iteration on the resulting linear system. This allowed for the convergence analysis of the backward pass of unrolling, and evidence that the backpropagation of unrolling can be improved by using superior linear system solvers. The paper showed that folded optimization offers substantial advantages over existing differentiable optimization frameworks, including modularization of the forward and backward passes and the ability to handle nonconvex optimization.

\section*{Acknowledgements}
This research is partially supported by NSF grant 2232054 and NSF CAREER Award 2143706. Fioretto is also supported by an Amazon Research Award and a Google Research Scholar Award. Its views and conclusions are those of the authors only.

\appendix
\section{Related Work}
\label{app:RelatedWork}
This section categorizes end-to-end optimization and learning approaches into those based on \emph{unrolling}, and \emph{analytical} differentiation. Since this paper focuses on converting unrolled implementations into analytical ones, each category is reviewed first below.

\paragraph{Unrolling optimization algorithms.}
Automatic Differentiation (AD) is  the primary method of backpropagating gradients in deep learning models for training with stochastic gradient descent.  Modern machine learning frameworks such as PyTorch have natively implemented differentiation rules for a variety of functions that are commonly used in deep models, as well as interfaces to define custom differentiation rules for new functions \citep{paszke2017automatic}. As a mainstay of deep learning, AD is also a natural tool for backpropagating through constrained optimization mappings. \emph{Unrolling} refers to the execution of an optimization algorithm, entirely on the computational graph, for backpropagation by AD from the resulting optimal solution to its input parameters. Such approaches are general and apply to a broad range of optimization models. They  can be performed simply by implementing a solution algorithm within an AD framework, without the need for analytical modeling of an optimization mapping's derivatives \citep{domke2012generic}. However, unrolling over many iterations has been shown to encounter issues of time and memory inefficiency due to the size of its computational graph \citep{amos2019optnet}. Further issues encountered in unrolling, such as vanishing and exploding gradients, are reminiscent of recurrent neural networks \citep{monga2021algorithm}. On the other hand, unrolling may offer some unique practical advantages, like the ability to learn optimization parameters such as stepsizes to accelerate the solution of each optimization during training \citep{shlezinger2022model}.

\paragraph{Analytical differentiation of optimization models.}

Differentiation through constrained argmin problems  in the context of machine learning was discussed as early as \cite{gould2016differentiating}, who proposed first to implicitly differentiate the argmin of a smooth, unconstrained convex function by its first-order optimality conditions, defined when the gradient of the objective function equals zero. This technique is then extended to find approximate derivatives for constrained problems, by applying it to their unconstrained log-barrier approximations. Subsequent approaches applied implicit differentiation to the KKT optimality conditions of constrained problems directly \citep{amos2019optnet,amos2019limited}, but only on special problem classes such as  Quadratic Programs. \cite{konishi2021end} extend the method of \cite{amos2019optnet}, by modeling second-order derivatives of the optimization for training with gradient boosting methods. \cite{donti2017task} uses the differentiable quadratic programming solver of \citep{amos2019optnet} to approximately differentiate general convex programs through quadratic surrogate problems.   Other problem-specific approaches to analytical differentiation models include ones for sorting and ranking \citep{blondel2020fast}, linear programming \citep{mandi2020interior}, and convex cone programming \citep{agrawal2019differentiating}.

The first general-purpose differentiable optimization solver was proposed in \cite{agrawal2019differentiable}, which leverages the fact that any convex program can be converted to a convex cone program \citep{nemirovski2007advances}. The equivalent cone program is subsequently solved and differentiated following \cite{agrawal2019differentiating}, which implicitly differentiates a zero-residual condition representing optimality \citep{busseti2019solution}. A differentiable solver library \texttt{cvxpy} is based on this approach, which converts convex programs to convex cone programs by way of their graph implementations as described in \cite{grant2008graph}. The main advantage of the system is that it applies to any convex program and has a simple symbolic interface. A major disadvantage is its restriction to solving problems only in a standard convex cone form with an ADMM-based conic programming solver, which performs poorly on some problem classes, as seen in Section \ref{sec:Experiments}. 

A related line of work concerns end-to-end learning with \emph{discrete} optimization problems, which includes linear programs, mixed-integer programs and constraint programs. These problem classes often define discontinuous mappings with respect to their input parameters, making their true gradients unhelpful as descent directions in  optimization. Accurate end-to-end training can be achieved by \emph{smoothing} the optimization mappings, to produce approximations which yield more useful gradients. A common approach is to augment the objective function with smooth regularizing terms such as euclidean norm or entropy functions \citep{wilder2019melding,ferber2020mipaal,mandi2020interior}. Others show that similar effects can be produced by applying random noise to the objective \citep{berthet2020learning,paulus2020gradient}, or through finite difference approximations \citep{poganvcic2019differentiation,sekhar2022gradient}. This enables end-to-end learning with discrete structures such as constrained ranking policies \citep{kotary2022end}, shortest paths in graphs \citep{elmachtoub2020smart}, and various decision models \citep{wilder2019melding}.

\section{Implementation Details}
\label{appendix:implementation}

The purpose of the \texttt{fold-opt} library is to facilitate the conversion of unfolded optimization code into JgP-based differentiable optimization, by leveraging automatic differentiation in Pytorch. It relies on the fact that backpropagation of a (gradient) vector $\bm{g}$ through the computational graph of a function $\bm{x} \rightarrow \bm{f}(\bm{x})$ by reverse-mode automatic differentiation is equivalent to computing the JgP product $\bm{g} \cdot \frac{\partial \bm{f}(\bm{x}) }{\partial \bm{x} }$.  

In principle, the following steps are required:  \textbf{(1)} After executing a blackbox optimization, initialize the  optimal solution $\mathbf{x}^{\star}$ onto the computational graph of PyTorch. \textbf{(2)} Execute a single step of the unrolled loop's update function to get $\mathbf{x}^{\star \star}(\mathbf{c}) =  \mathcal{U}(\mathbf{x}^{\star}, \; \mathbf{c})$ and save its computational graph; in principle, the forward execution can be avoided given its known result $\mathbf{x}^{\star}$. \textbf{(3)} Backpropagate each column of the identity matrix from $\mathbf{x}^{\star \star}(\mathbf{c})$ to $\mathbf{x}^{\star}$ and from $\mathbf{x}^{\star \star}(\mathbf{c})$ to $\mathbf{c}$ to assemble $\mathbf{\Phi} \coloneqq {\frac{\partial \mathcal{U} }{\partial \mathbf{x}^{\star}} (\mathbf{x}^{\star}(\mathbf{c}), \; \mathbf{c} )}$ and $\mathbf{\Psi} \coloneqq {\frac{\partial \mathcal{U} }{\partial \mathbf{c}} (\mathbf{x}^{\star}(\mathbf{c}), \; \mathbf{c} )}$, respectively (see Section \ref{sec:FoldedOptimization}).  \textbf{(4)} Solve equation  $(\mathbf{I} - \mathbf{\Phi} ) \frac{\partial \mathbf{x}^{\star}}{\partial \mathbf{c}} = \mathbf{\Psi}$  for the Jacobian $\frac{\partial \bm{x}^{\star} }{\partial \mathbf{c}}(\bm{c})$ using a linear system solver, and apply the Jacobian-vector product $\frac{\partial \mathcal{L}}{\partial \mathbf{c}} = \frac{\partial \mathcal{L}}{\partial \mathbf{x}^{\star}} \cdot \frac{\partial \mathbf{x}^{\star}(\mathbf{c})}{\partial \mathbf{c}}$ to backpropagate incoming gradients. 

In practice, since only the Jacobian-gradient product $\frac{\partial \mathcal{L}}{\partial \mathbf{c}} = \frac{\partial \mathcal{L}}{\partial \mathbf{x}^{\star}} \cdot \frac{\partial \mathbf{x}^{\star}(\mathbf{c})}{\partial \mathbf{c}}$ is required for backpropagation,  the above steps \textbf{(3)} and \textbf{(4)}  are computationally superflous. It is more 
efficient to solve a related linear system directly for the vector $\frac{\partial \mathcal{L}}{\partial \mathbf{c}} = \frac{\partial \mathcal{L}}{\partial \mathbf{x}^{\star}} \cdot \frac{\partial \mathbf{x}^{\star}(\mathbf{c})}{\partial \mathbf{c}}$ . Furthermore, the linear system can be solved by iterative methods without explicitly constructing the matrices $\mathbf{\Phi}$ and $\mathbf{\Psi}$, by simulating their left-sided JgP's using reverse-mode AD though $\mathcal{U}$. To see how, write the backpropagation of the loss gradient $\frac{\partial \mathcal{L}}{\partial \mathbf{x}^{\star}}$ through $k$ unfolded steps of \eqref{eq:opt_iteration} at the fixed point $\bm{x}^{\star}$ as 
\begin{equation}
\label{eq:backprop_goal}
\frac{\partial \mathcal{L}}{\partial \mathbf{x}^{\star}}^T \left(   \frac{\partial \mathbf{x}^{k}(\mathbf{c})}{\partial \mathbf{c}}  \right) .
\end{equation}
We seek to compute the limit $\frac{\partial \mathcal{L}}{\partial \mathbf{c}} = \bm{g}^T \bm{J}$ where $\bm{g} = \frac{\partial \mathcal{L}}{\partial \mathbf{x}^{\star}}$, $\bm{J} \coloneqq \lim_{k \rightarrow \infty} \bm{J}_k  $  , and  $\bm{J}_k =  \frac{\partial \mathbf{x}^{k}(\mathbf{c})}{\partial \mathbf{c}} $  . Following the backpropagation rule \eqref{eq:update-diff-fixed}, the expression \eqref{eq:backprop_goal} is equal to 
\begin{subequations}
\begin{align}
    \bm{g}^T \bm{J}_k &=  \bm{g}^T \left( \mathbf{\Phi} \bm{J}_{k-1} + \mathbf{\Psi} \right) \\
    &=  \bm{g}^T \left( \mathbf{\Phi}^k \mathbf{\Psi} + \mathbf{\Phi}^{k-1} \mathbf{\Psi}  + \ldots + \mathbf{\Phi} \mathbf{\Psi}  + \mathbf{\Psi}   \right)
\end{align}
\end{subequations}

This expression can be rearranged as
\begin{equation}
\label{eq:rearrange}
    \bm{g}^T \bm{J}_k  =   \bm{v}_k^T \mathbf{\Psi} 
\end{equation}

\noindent where
\begin{equation}
\label{eq:def_v}
 \bm{v}_k^T \coloneqq \left(\bm{g}^T  \mathbf{\Phi}^k  + \bm{g}^T \mathbf{\Phi}^{k-1}   + \ldots + \bm{g}^T  \mathbf{\Phi}   + \bm{g}^T    \right)   .
\end{equation}
The sequence $\bm{v}_k$ can be computed most efficiently as
\begin{equation}
\label{eq:recursion_v}
  \bm{v}_k^T = \bm{v}_{k-1}^T \mathbf{\Phi} + \bm{g}^T  
\end{equation}
which identifies $ \bm{v} \coloneqq  \lim_{k \rightarrow \infty} \bm{v}_k  $ as the solution of the linear system
\begin{equation}
\label{eq:v_system}
  \bm{v}^T (\bm{I} - \mathbf{\Phi})   =   \bm{g}^T  
\end{equation}
under the conditions of Lemma \eqref{lemma:linear-iteration}, after transposing both sides of \eqref{eq:recursion_v} and \eqref{eq:v_system} . 

Once $\bm{v}^T$ is calculated by \eqref{eq:recursion_v}, the desired JgP is
\begin{equation}
\label{eq:gTJ}
  \bm{g}^T \bm{J} = \bm{v}^T \mathbf{\Psi}  .
\end{equation}
The left matrix-vector product with respect to  $\mathbf{\Phi}$ in \eqref{eq:recursion_v} and $\mathbf{\Psi}$ in \eqref{eq:gTJ} can be computed by backpropagation through the computational graph of the update function $\mathcal{U}(\bm{x}^{\star}(\bm{c}), \; \bm{c})$, backward  to $\bm{x}^{\star}(\bm{c})$ and $\bm{c}$ respectively.

Notice that in contrast to unfolding, this backpropagation method requires to store the computational graph only for a single update step, rather than for an entire  optimization routine consisting of many iterations.

Having reduced the calculation of $\bm{g}^T \bm{J}$ to the solution of a linear system \eqref{eq:v_system} followed by a matrix-vector product \eqref{eq:gTJ}, it is clear how efficiency can be improved by replacing the LFPI iterations \eqref{eq:recursion_v} with a faster-converging linear solution scheme based on matrix-vector products, such as Krylov subspace methods. This emphasizes the inherently sub-optimal convergence rate of backpropagation in unfolded solvers, and such upgrades will be planned for future versions of \texttt{fold-opt}.

\section{Optimization Models}
\label{appendix:models}

\paragraph{Soft Thresholding Operator}

The soft thresholding operator defined below arises in the solution of denoising problems proximal gradient descent variants as the proximal operator to the $\| \cdot \|_1$ norm:
\[
    \mathcal{T}_{\lambda}(\mathbf{x}) = \left[  | \mathbf{x} | - \lambda \mathbf{e}  \right]_{+}  \cdot \textit{sgn} (\mathbf{x})
\]

\paragraph{Fast Dual Proximal Gradient Descent}
The following is an FDPG implementation from \cite{beck2017first}, specialized to solve the denoising problem 
\[
    \mathbf{x}^{\star}(\mathbf{D}) = \argmin_{\mathbf{x}} \;\;
    \frac{1}{2} \| \mathbf{x}-\mathbf{d} \|^2 +  \lambda \| \mathbf{D} \mathbf{x} \|_1,
\]
of Section \ref{sec:Experiments}. Letting $\mathbf{u}_k$ be the primal solution iterates, with $t_0=1$ and arbitrary $\mathbf{w}_0 = \mathbf{y}_0$:
\begin{subequations}
\begin{align}
    \mathbf{u}_k &= \mathbf{D}^T \mathbf{w}_k + \mathbf{d} \\
    \mathbf{y}_{k+1} &= \mathbf{w}_k - \frac{1}{4} \mathbf{D} \mathbf{u}_k + \frac{1}{4} \mathcal{T}_{4 \lambda} ( \mathbf{D} \mathbf{u}_k - 4 \mathbf{w}_k ) \\
    t_{k+1} &= \frac{1 + \sqrt{1+4 t_k^2}}{2} \\
    \mathbf{w}_{k+1} &= \mathbf{y}_{k+1} + \left( \frac{t_k - 1}{t_{k+1}} \right) (\mathbf{y}_{k+1} - \mathbf{y}_k)
\end{align}
\end{subequations}

\paragraph{Quadratic Programming by ADMM.}
A Quadratic Program is an optimization problem with convex quadratic objective and linear constraints. The following ADMM scheme of \cite{boyd2011distributed} solves any quadratic programming problem of the standard form:
\begin{subequations}
\begin{align}
    \argmax_{x} &\;\;
    \frac{1}{2} \mathbf{x}^T \mathbf{Q} \mathbf{x} + \mathbf{p}^T \mathbf{x}\\
    \textit{s.t.} \;\; & \mathbf{A}\mathbf{x} = \mathbf{b}\\
     &\mathbf{x} \geq \mathbf{0}
\end{align}
\end{subequations}
by declaring the operator splitting
\begin{subequations}
\begin{align}
    \argmax_{\mathbf{x}} &\;\;
    f(\mathbf{x}) + g(\mathbf{z})\\
    \textit{s.t.} \;\; & \mathbf{x}=\mathbf{z}
\end{align}
\end{subequations}
with $  f(\mathbf{x}) = \frac{1}{2} \mathbf{x}^T \mathbf{Q} \mathbf{x} + \mathbf{p}^T \mathbf{x}  $, $dom(f) = \{ \mathbf{x}: \mathbf{A}\mathbf{x} = \mathbf{b} \} $, $  g(\mathbf{x}) = \delta( \mathbf{x} \geq 0 )  $ and where $\delta$ is the indicator function. 

This results in the following ADMM iterates: 
\begin{enumerate}
    \item Solve $\begin{bmatrix} \mathbf{P}+\rho \mathbf{I}&\mathbf{A}^T\\\mathbf{A}&\mathbf{0}\end{bmatrix} \begin{bmatrix} \mathbf{x}_{k+1} \\ \pmb{\nu}\end{bmatrix}   =  \begin{bmatrix}-\mathbf{q} + \rho(\mathbf{z}_k - \mathbf{u}_k)\\\mathbf{b}\end{bmatrix}  $
    \item $\mathbf{z}_{k+1} = (\mathbf{x}_{k+1} + \mathbf{u}_k)_+$
    \item $\mathbf{u}_{k+1} = \mathbf{u}_k + \mathbf{x}_{k+1} - \mathbf{z}_{k+1}$
\end{enumerate}
Where $(1)$ represents the KKT conditions for equality-constrained minimization of $f$,  $(2)$ is projection onto the positive orthant, and  $(3)$ is the dual variable update.

\paragraph{Sequential Quadratic Programming.}
For an optimization mapping defined by Problem (\ref{eq:opt_generic}) where $f$, $g$ and $h$ are continuously differentiable, define the operator $\mathcal{T}$ as:
\begin{subequations}
\begin{align}
    \mathcal{T}(\mathbf{x},\pmb{\lambda}) = \argmin_{\mathbf{d}} &\;\;
     \nabla f(\mathbf{x})^T \mathbf{d} + \mathbf{d}^T  \nabla^2 \mathcal{L}(\mathbf{x},\pmb{\lambda}) \mathbf{d} \\
    \texttt{s.t.} \;\; &
    h(\mathbf{x}) + \nabla h(\mathbf{x})^T \mathbf{d} = \mathbf{0}\\
     &g(\mathbf{x}) + \nabla g(\mathbf{x})^T \mathbf{d} \leq \mathbf{0}
\end{align}
\end{subequations}
where dependence of each function on parameters $\mathbf{c}$ is hidden. The function $\mathcal{L}$ is a Lagrangian function of Problem (\ref{eq:opt_generic}).  Then given initial estimates of the primal and dual solution $(x_0,\lambda_0)$, sequential quadratic programming is defined by 
\begin{subequations}
\begin{align}
    (\mathbf{d},\pmb{\mu}) =  \mathcal{T}(\mathbf{x}_{k},\pmb{\lambda}_{k}) \label{line:1}\\
    \mathbf{x}_{k+1} = \mathbf{x}_k + \alpha_k \mathbf{d} \\
    \pmb{\lambda}_{k+1} =   \alpha_k( \pmb{\mu} - \pmb{\lambda}_k) 
\end{align}
\end{subequations}
Here, the inner optimization $\mathcal{O} = \mathcal{T}$ as in Section \ref{sec:unfolding}.

\paragraph{Denoising Problem - Quadratic Programming form}
The following quadratic program is equivalent to the unconstrained denoising problem of Section \ref{sec:Experiments}:
\begin{subequations}
\label{eq:denoiser_QP}
\begin{align}
    \mathbf{x}^{\star}(\mathbf{D}) = \argmin_{\mathbf{x},\mathbf{t}} &\;\;
    \frac{1}{2} \| \mathbf{x}-\mathbf{d} \|^2 + \lambda \overrightarrow{\mathbf{1}} \mathbf{t}    \\
    \textit{s.t.} \;\;
    &    \;\;\;\;\;\;\;\;\;\;\;   \mathbf{D} \mathbf{x} \leq \mathbf{t} \\
    & -\mathbf{t} \leq \mathbf{D} \mathbf{x} 
\end{align}
\end{subequations}

\section{Effect of Stepsize in Fixed-Point Folding}
\label{appendix:stepsize}

Many optimization algorithms rely on a parameter such as a stepsize, which may be constant or change according to some rule at each iteration. Since folded optimization depends on a model of some algorithm at its fixed point to compute gradients, a stepsize must be chosen for its implementation as well. In general, the  stepsize need not be chosen so that the forward pass optimization converges; in all  the example algorithms of this paper, the fixed point remains stationary even for large stepsizes. Instead, the stepsize should be chosen according to its effect on the spectral radius $\rho(\mathbf{\Phi})$ (see Theorem \ref{thm:unfolding_convergence_fixedpt}). For example, in the case of folded PGD, 
\begin{equation}
\label{eq:stepsize-pgd}
        \mathbf{\Phi}=  \frac{\partial}{\partial \mathbf{x}} \mathcal{P}_{\mathbf{C}}( \mathbf{x}^{\star} - \alpha \nabla f (\mathbf{x}^{\star}) ),
\end{equation}
which depends explicitly on the constant stepsize $\alpha$. In practice, it is observed that larger $\alpha$ lead to convergence in less LFPI iterations during fixed-point folding. However when $\alpha$ becomes too large, the resulting gradients explode. For practical purposes, a large range of $\alpha$  result in backward-pass convergence to the same gradients so that careful stepsize selection is not required. For the purpose of optimizing efficiency, $\mathbf{\Phi}$ could be analyzed to determine its optimal $\alpha$, but such an analysis is not pursued within the scope of this paper.

\section{Experimental Details}
\label{appendix:experimental}
Additional details for each experiment of Section \ref{sec:Experiments} are described in their respective subsections below. Note that in all cases, the machine learning models compared in Section \ref{sec:Experiments} use identical settings within each study, with the exception of the optimization components being compared.

\subsection{Nonconvex Bilinear Programming}

\paragraph{Data generation.} Data is generated as follows for the nonconvex bilinear programming experiments. Input data consists of  $1000$ points $\in \mathbb{R}^{10}$ sampled uniformly in the interval $\left[-2, 2\right]$. To produce targets, inputs are fed into a randomly initialized $2$-layer neural network  with $\tanh$ activation, and gone through a nonlinear function $x \cos{2x}+ \frac{5}{2} \log{\frac{x}{x+2}} + x^2\sin{4x}$ to increase the nonlinearity of the mapping between inputs and targets. Train and test sets are split  $90 / 10$. 

\paragraph{Settings.} A 5-layer NN with ReLU activation trained to predict cost $\mathbf{c}$ and $\mathbf{d}$. We train model with Adam optimizer on learning rate of $10^{-2}$ and batch size 32 for 5 epochs.

Nonconvex objective coefficients Q are pre-generated randomly with 15 different seeds. Constraint parameters are chosen arbitrarily as $p=1$ and $q=2$. The average solving time in Gurobi is $0.8333$s, and depends per instance on the predicted parameters $\mathbf{c}$ and $\mathbf{d}$. However the average time tends to be dominated by a minority of samples which take up to $\sim 3$ min. This issue is mitigated by imposing a time limit in solving each instance. While the correct gradient is not guaranteed under early stopping, the overwhelming majority of samples are fully optimized under the time limit, mitigating any adverse effect on training. Differences in training curves under $10$s and $120$s timeouts are negligible due to this effect; the results reported use the $120$s timeout.

\subsection{Enhanced Denoising}

\paragraph{Data generation.} The data generation follows \cite{amos2019optnet}, in which $10000$ random $1D$ signals of length $100$ are generated and treated as targets. Noisy input data is generated by adding random perturbations to each element of each signal, drawn from independent standard-normal distributions. A $90 / 10$ train/test split is applied to the data.

\paragraph{Settings.} A learning rate of $10^{-3}$ and batch size $32$ are used in each training run. Each denoising model is initialized to the classical total variation denoiser by setting the   learned matrix of parameters $\mathbf{D} \in \mathbb{R}^{99 \times 100}$ to the differencing operator, for which $D_{i,i} = 1$ and $D_{i,i+1} = -1 \;\; \forall i$ with all other values $0$.

\subsection{Multilabel Classification}

\paragraph{Dataset.} We follow the  experimental settings and implementation  provided by \cite{Berrada2018SmoothLF}. Each model is evaluated on the noisy top-5 CIFAR100 task. CIFAR-100 labels are organized into 20 “coarse” classes,
each consisting of 5 “fine” labels. With some probability, random noise is added to each label by resampling from the set of “fine”
labels. The $50$k data samples are given a $90 / 10$ training/testing split. 

\paragraph{Settings.}
The DenseNet $40$-$40$ architecture is trained by SGD optimizer with learning rate $10^{-1}$ and batch size $64$ for $30$ epochs to minimize a cross-entropy loss function.

\subsection{Portfolio Optimization}

\paragraph{Data Generation.} The data generation follows exactly the prescription of Appendix D in \cite{elmachtoub2020smart}. Uniform random feature data are mapped through a random nonlinear function to create synthetic price data for training and evaluation. A random matrix is used as a linear mapping, to which nonlinearity is introduced by exponentiation of its elements to a chosen degree. The studies in Section \ref{sec:Experiments} use degrees $1$, $2$ and $3$.

\paragraph{Settings.} A five-layer ReLU network is trained to predict asset prices $\mathbf{c} \in \mathbb{R}^{20}$ using Adam optimizer with learning rate $10^{-2}$ and batch size $32$.

\section{Decision-Focused Learning}
\label{appendix:DFL}
For unfamiliar readers, this section provides background on the decision-focused learning setting, also known as predict-and-optimize, which characterizes the first and last experiments of Section \ref{sec:Experiments} on bilinear programming and portfolio optimization. In this paper, those terms refer to settings in which an optimization mapping 
\begin{subequations} 
    \label{eq:opt_dfl}
    \begin{align}
        \mathbf{x}^{\star}(\mathbf{c}) =  \argmin_{\mathbf{x}} &\; f(\mathbf{x},\mathbf{c}) \\        
        \text{subject to: }&\;
        g(\mathbf{x}) \leq \mathbf{0}, \\
        &\; h(\mathbf{x}) = \mathbf{0},
    \end{align}
\end{subequations}
represents a decision model and is parameterized by the vector  $\mathbf{c}$, but only in its objective function. The goal of the supervised learning task is to predict $\hat{\mathbf{c}}$ from feature data such that the resulting $\mathbf{x}^{\star}(\hat{\mathbf{c}})$  optimizes the objective under ground-truth parameters $\bar{\mathbf{c}}$, which is $f(\mathbf{x}^{\star}(\hat{\mathbf{c}}),\bar{\mathbf{c}})$. This is equivalent to minimizing the \emph{regret} loss function:
\begin{equation} 
    \label{eq:regret_loss}
        \text{regret}(\hat{\mathbf{c}}, \bar{\mathbf{c}}) = f(\mathbf{x}^{\star}(\hat{\mathbf{c}}),\bar{\mathbf{c}}) - f(\mathbf{x}^{\star}(\bar{\mathbf{c}}),\bar{\mathbf{c}}),
\end{equation}
which measures the suboptimality, under ground-truth objective data, of decisions $\mathbf{x}^{\star}(\hat{\mathbf{c}})$ resulting from prediction $\hat{\mathbf{c}}$. 

When $\mathbf{x}^{\star}$ and $f$ are differentiable, the prediction model for $\hat{\mathbf{c}}$ can be trained to minimize regret directly in an \emph{integrated} predict-and-optimize model.   Since the task amounts to predicting $\hat{\mathbf{c}}$ under ground-truth $\bar{\mathbf{c}}$, a \emph{two-stage} approach is also available which does not require backpropagation through $\mathbf{x}^{\star}$. In the two-stage approach, the loss function $\text{MSE}(\hat{\mathbf{c}},\bar{\mathbf{c}})$ is used to directly target ground-truth parameters, but the final test criteria is still measured by regret. Since the integrated approach minimizes regret directly, it generally outperforms the two-stage in this setting.

\section{Additional Figures}
\label{appendix:Figures}

\begin{figure}
    \includegraphics[width=0.9\linewidth]{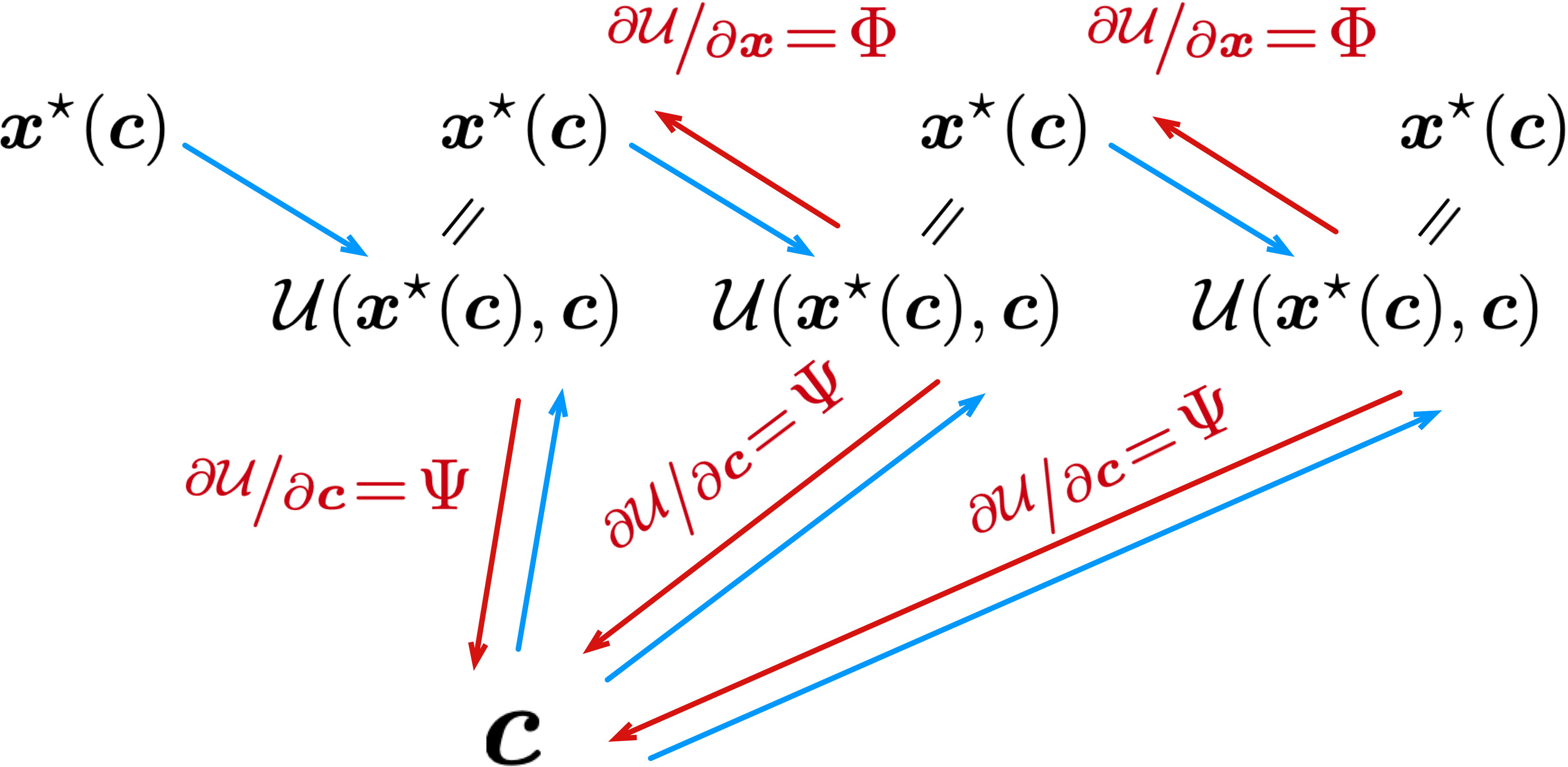}
    \caption{Computational graph for unfolding three iterations of (\ref{eq:opt_iteration}) at a precomputed optimal solution $\mathbf{x}^{\star}$  }
    \label{fig:comp_graph_3}
\end{figure}

\subsection{Enhanced Denoising Experiment}

Figure \ref{fig:denoiser_plots} shows test loss curves, for a variety of $\lambda$, in learning enhanced denoisers with the chosen baseline method $\texttt{qpth}$. As per the original experiment of \cite{amos2019optnet}, the implementation is facilitated by conversion to the quadratic programming form of model (\ref{eq:denoiser_QP}). The results from $\textit{f-FDPG}$ are again shown alongside for comparison. Small differences between the results stem from the slightly different solutions found by their respective solvers at each training iteration, due to their differently-defined error tolerance thresholds.

\begin{figure}
     \centering
     \begin{subfigure}[b]{0.49\linewidth}
         \centering
         \includegraphics[width=\textwidth]{denoiser_dpg_exp.pdf}
         \caption{\textit{f-FDPG}}
     \end{subfigure}
     \hfill
     \begin{subfigure}[b]{0.49\linewidth}
         \centering
         \includegraphics[width=\textwidth]{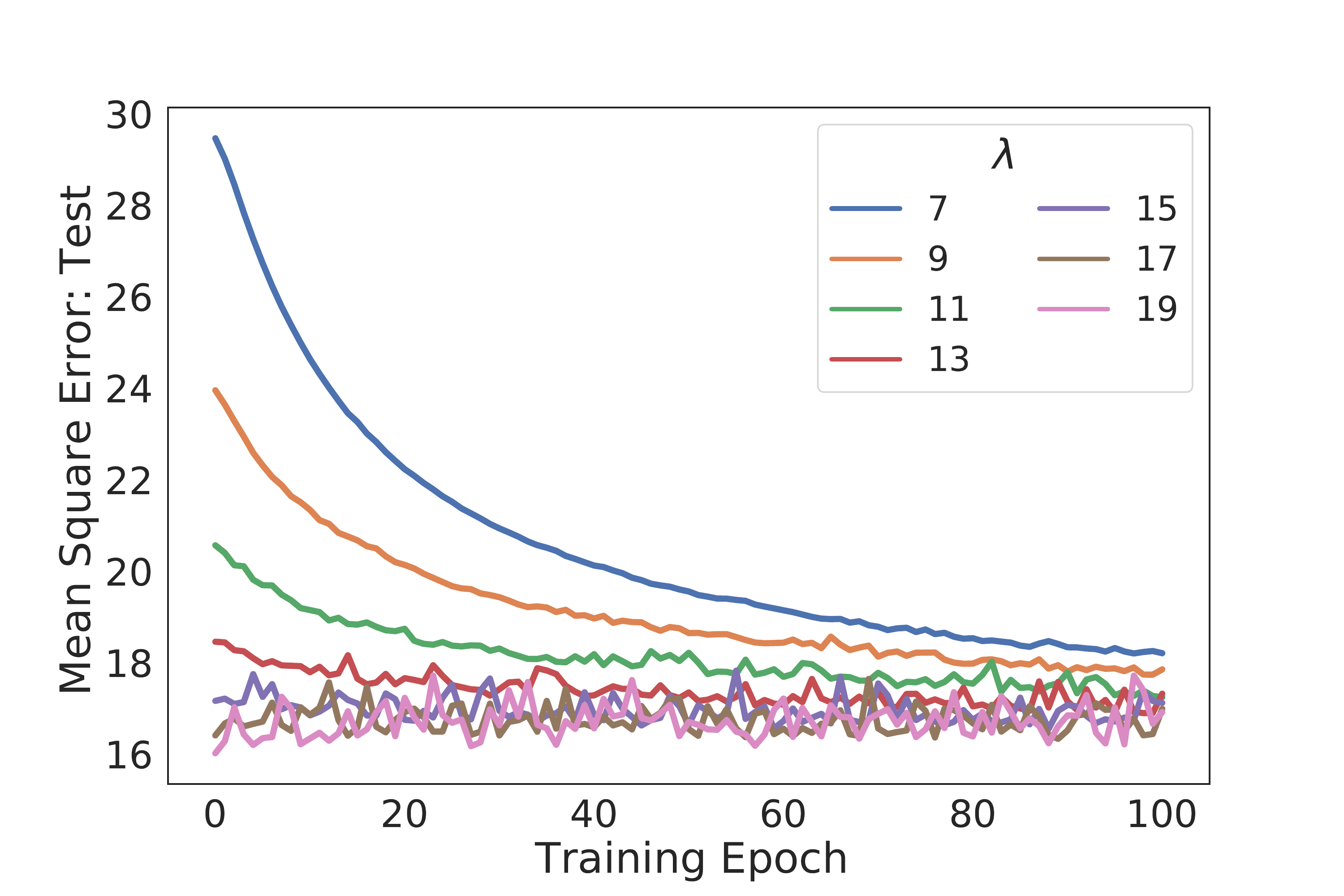}
         \caption{\texttt{qpth}}
     \end{subfigure}
     \hfill

    \caption{Enhanced Denoiser Test Loss}
     \label{fig:denoiser_plots}
\end{figure}

\subsection{Multilabel Classification Experiment}

Figure \ref{fig:ml_plot} shows Top-$1$ and Top-$k$ accuracy on both train and test sets where $k=5$. Accuracy curves are indistinguishable on the training set even after $30$ epochs. On the test set, generalization error manifests slightly differently for each model in the first few epochs.

\begin{figure}
     \centering
     \begin{subfigure}[b]{0.49\linewidth}
         \centering
         \includegraphics[width=\textwidth]{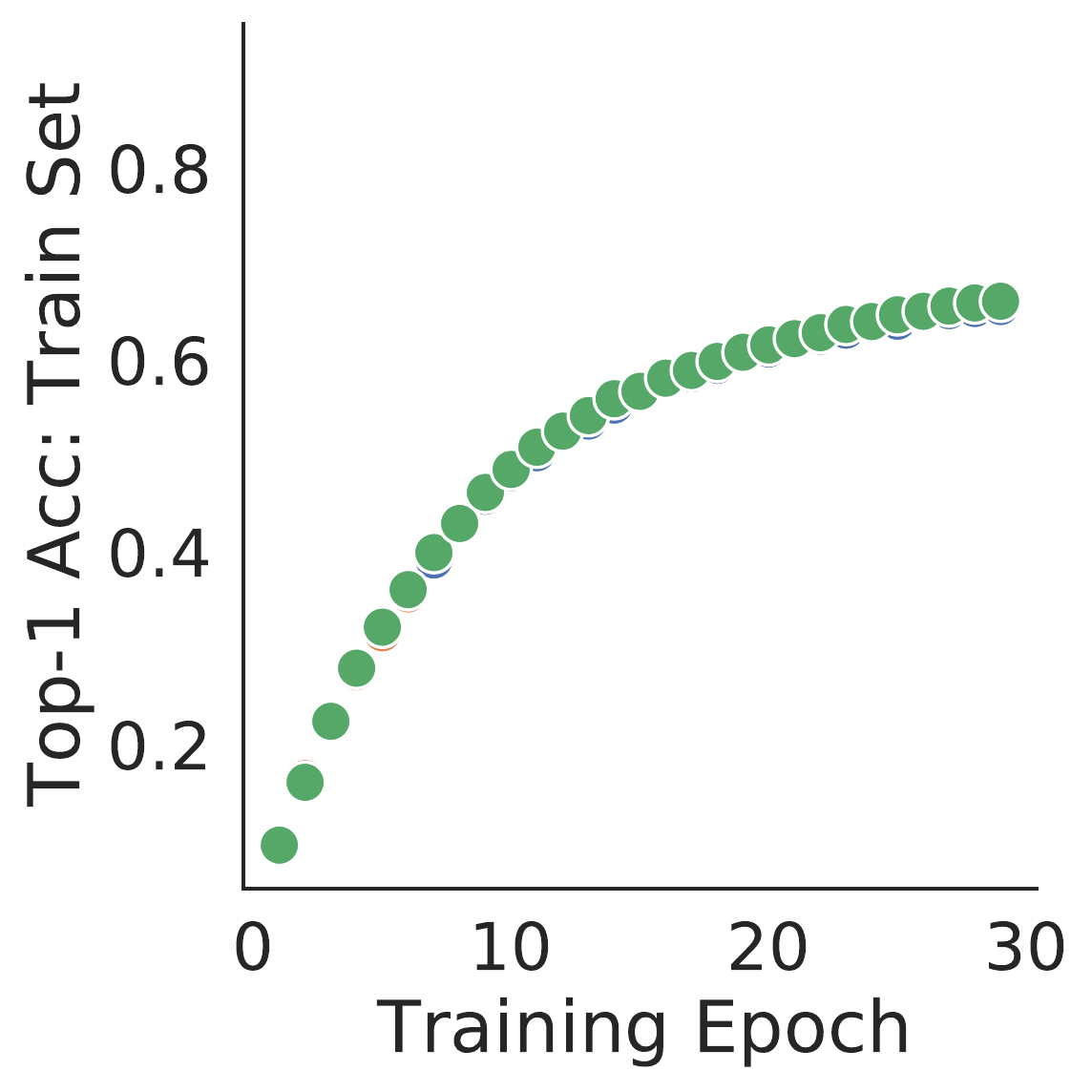}
         \caption{Top-1 Accuracy: Train}
     \end{subfigure}
     \hfill
     \begin{subfigure}[b]{0.49\linewidth}
         \centering
         \includegraphics[width=\textwidth]{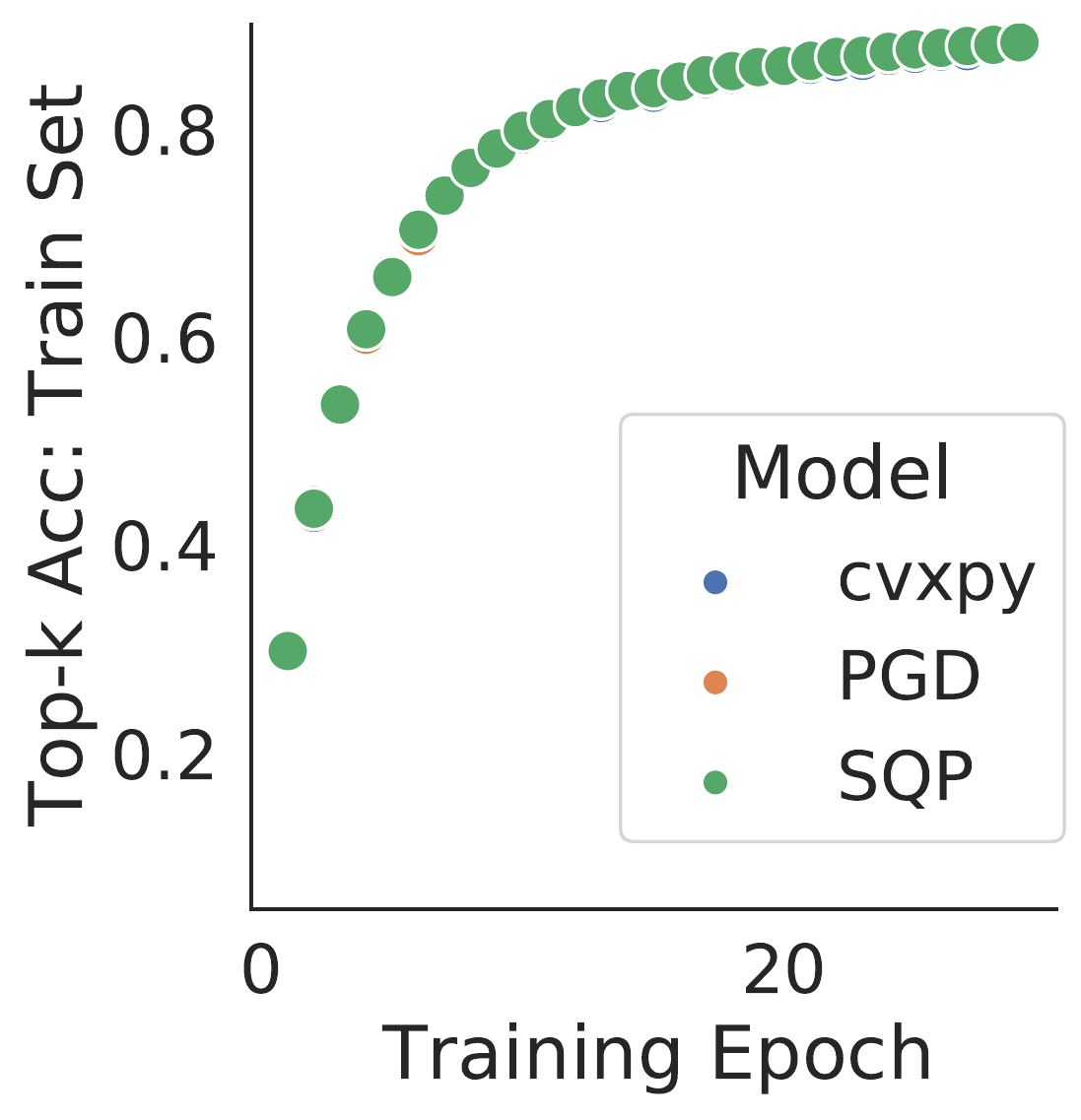}
         \caption{Top-k Accuracy: Train}
     \end{subfigure}
     \hfill
     
         \begin{subfigure}[b]{0.49\linewidth}
         \centering
         \includegraphics[width=\textwidth]{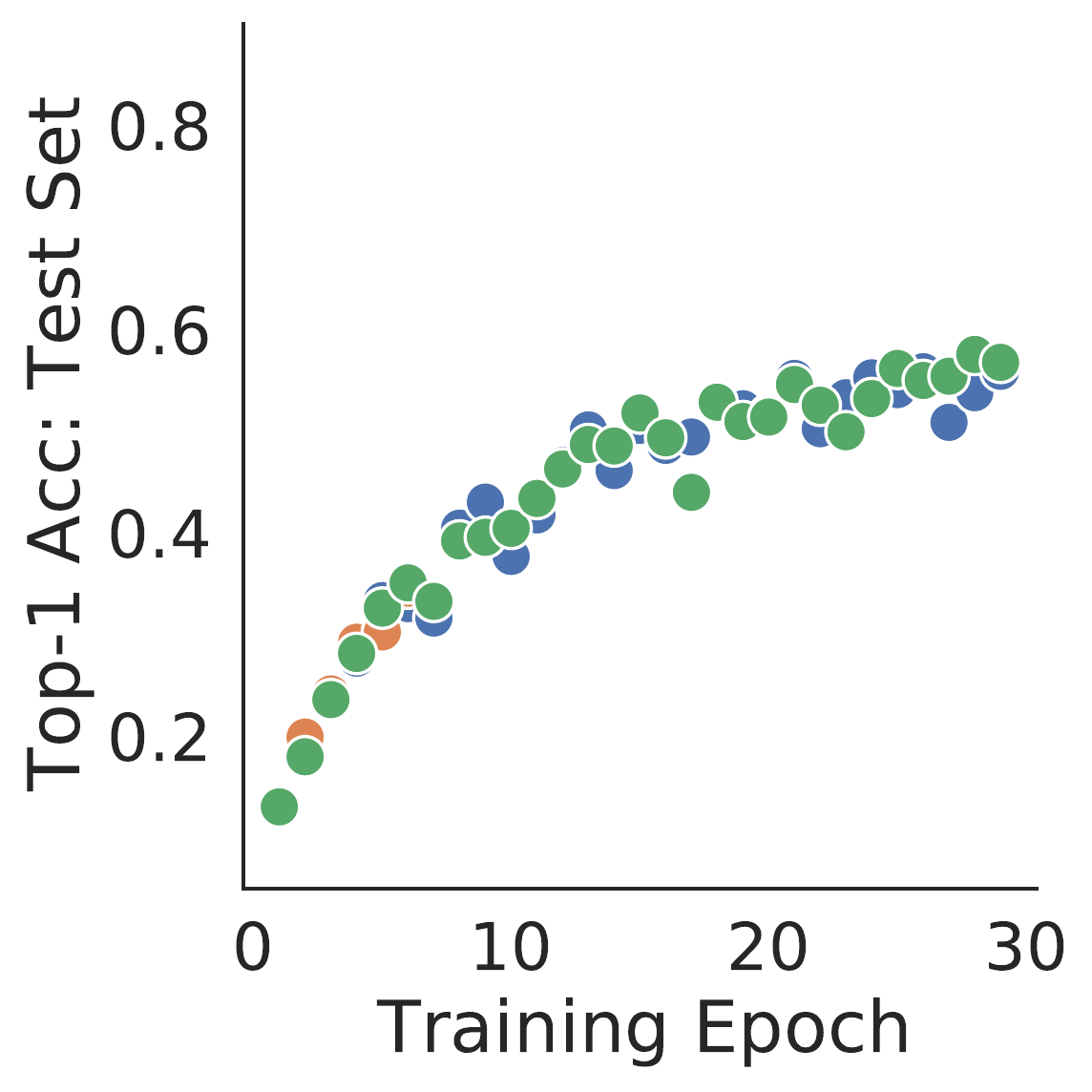}
         \caption{Top-1 Accuracy: Test}
     \end{subfigure}
     \hfill
    \begin{subfigure}[b]{0.49\linewidth}
         \centering
         \includegraphics[width=\textwidth]{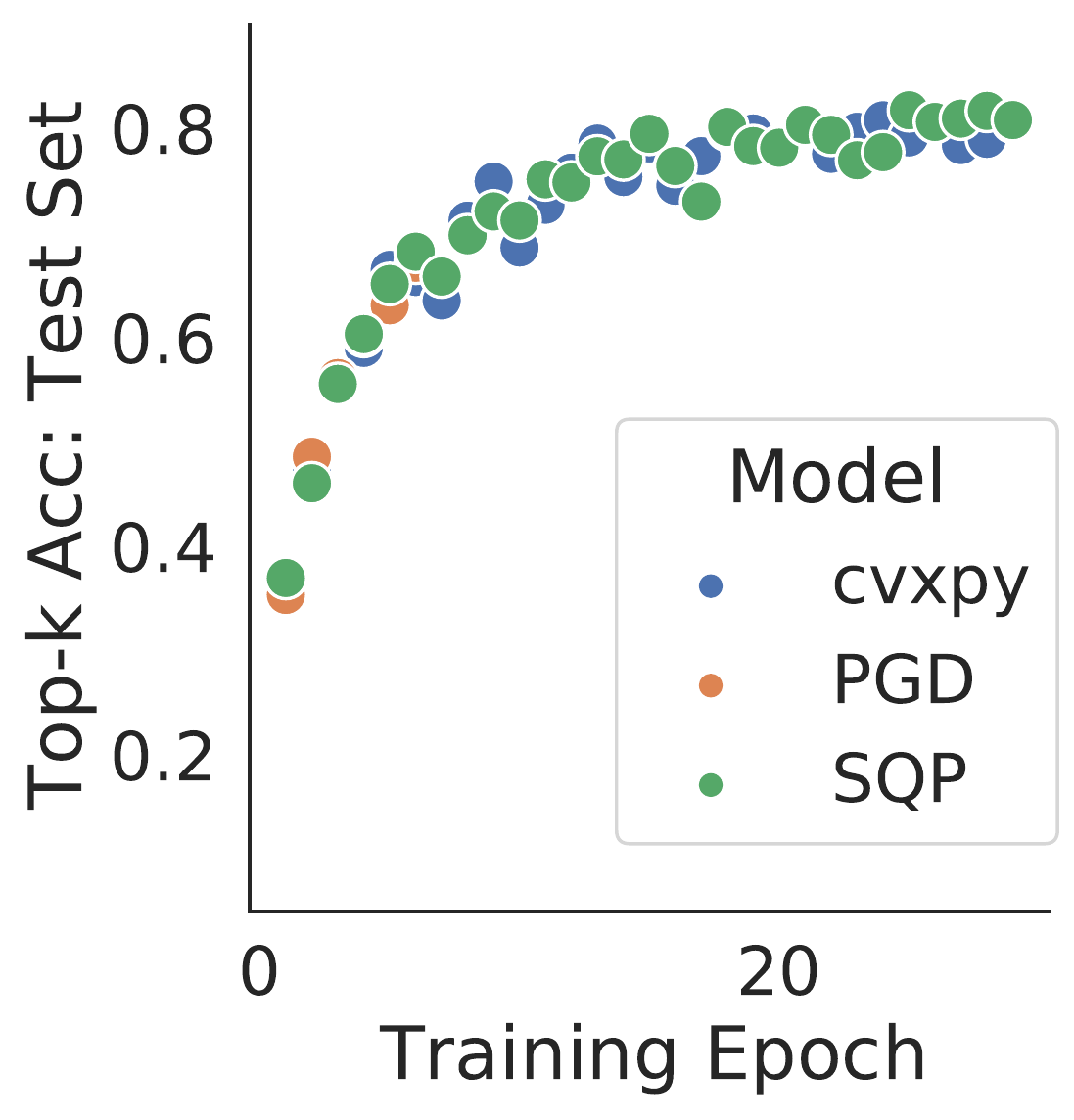}
         \caption{Top-k Accuracy: Train}
        
     \end{subfigure}
     \hfill
    \caption{Multilabel Classification Accuracy}
     \label{fig:ml_plot}
\end{figure}

\subsection{Fixed-Point Unfolding: Computational Graph}
Figure \ref{fig:comp_graph_3} shows a simplified computational graph of unfolding the iteration (\ref{eq:opt_iteration}) at a precomputed fixed point $\mathbf{x}^{\star}$. Forward pass operations are shown in blue arrows, and consist of repeated application of the update function $\mathcal{U}$. Its first input, $\mathbf{x}^{\star}(\mathbf{c})$, is produced the previous call to $\mathcal{U}$ while the second input $\mathbf{c}$ is at the base of the graph.  The corresponding backward passes are shown in red, as viewed through the Jacobians $\frac{\partial \mathcal{U}}{\partial \mathbf{x}}$ and $\frac{\partial \mathcal{U}}{\partial \mathbf{c}}$, which equal $\bf{\Phi}$ and $\bf{\Psi}$ at each iteration since $\mathbf{x}_k = \mathbf{x}^{\star} \;\; \forall k$. This causes the resulting multivariate chain rule to take the linear fixed-point iteration form of Lemma \ref{lemma:linear-iteration}.
\bibliographystyle{named}
\bibliography{bib}

\end{document}